\pdfoutput=1

\documentclass[11pt]{article}

\PassOptionsToPackage{dvipsnames}{xcolor}

\usepackage[preprint]{acl}

\usepackage{times}
\usepackage{latexsym}

\usepackage[T1]{fontenc}

\usepackage[utf8]{inputenc}

\usepackage{microtype}
\usepackage{inconsolata}

\usepackage{graphicx}

\usepackage{CJKutf8}

\usepackage{url}
\usepackage{xspace}
\usepackage{enumitem}
\usepackage{booktabs}
\usepackage{nicefrac}
\usepackage{booktabs}
\usepackage{multirow}
\usepackage{amsmath}
\usepackage{amssymb}
\usepackage{amsfonts}
\usepackage{amsthm}
\usepackage{thmtools}
\usepackage{mathtools}
\usepackage{bbm}
\usepackage{bm}
\usepackage{scalerel}

\usepackage{adjustbox}

\usepackage{subcaption}
\usepackage{wrapfig}
\usepackage{thm-restate}

\usepackage[linesnumbered,ruled,vlined]{algorithm2e}
\usepackage[most]{tcolorbox}

\definecolor{darkblue}{rgb}{0, 0, 0.5}

\usepackage{tabularray}
\usepackage{cellspace}
\usepackage{setspace}
\newcommand\cincludegraphics[2][]{\raisebox{-0.28\height}{\includegraphics[#1]{#2}}}

\makeatletter                             %
    \DeclareRobustCommand*{\escapeus}[1]{%
    \begingroup\@activeus\scantokens{#1\endinput}\endgroup}
    \begingroup\lccode`\~=`\_\relax
   \lowercase{\endgroup\def\@activeus{\catcode`\_=\active \let~\_}}
\makeatother
\newcommand{\myemph}[1]{\textsf{{\escapeus{#1}}}}

\usepackage{amssymb}%
\usepackage{pifont}%
\newcommand{\xmark}{\ding{55}}%

\usepackage{cleveref}

\definecolor{colourcharacter}{RGB}{70,120,255}
\definecolor{coloursubword}{RGB}{220,50,140}
\definecolor{colourbase}{RGB}{255,120,20}
\definecolor{boxgray}{RGB}{140,140,140}
\definecolor{macrocolour}{RGB}{128,0,128}

\newcommand{\mysubword}[2]{\newcommand{#1}{{\color{coloursubword}#2}}}
\newcommand{\myfunction}[2]{\newcommand{#1}{{\color{colourbase}#2}}}

\newcommand{\alphabet}{{\color{colourcharacter}\Sigma}}
\newcommand{\character}{{\color{colourcharacter}c}}
\newcommand{\characters}{{\color{colourcharacter}\mathbf{c}}}
\newcommand{\dataset}{\mathcal{D}}

\newcommand{\vocab}{{\color{coloursubword}\mathcal{S}}}
\newcommand{\subword}{{\color{coloursubword}s}}
\newcommand{\subwords}{{\color{coloursubword}\mathbf{s}}}
\newcommand{\subwordt}[1]{{\color{coloursubword}s_{\textcolor{black}{#1}}}}

\newcommand{\merge}{{\color{coloursubword}m}}
\newcommand{\merges}{{\color{coloursubword}\mathbf{m}}}

\mysubword{\tokeniser}{\mathbb{T}}

\newcommand{\tokenise}{{\color{colourbase}\mathtt{tok}}}
\newcommand{\detokenise}{{\color{colourbase}\mathtt{detok}}}
\newcommand{\directtoken}{{\color{colourbase}\mathtt{tok}_{\Rightarrow}}}
\newcommand{\bottomuptoken}{{\color{colourbase}\mathtt{tok}_{\uparrow}}}

\myfunction{\objectivefunc}{\mathfrak{G}}
\newcommand{\objectivefunccomp}{\objectivefunc_{\mathrm{comp}}}
\newcommand{\objectivefuncll}{\objectivefunc_{\mathrm{ll}}}
\myfunction{\qualityfunc}{\mathfrak{J}}

\newcommand{\tokenisespace}{\mathcal{T}}
\newcommand{\vocabsize}{K}

\newcommand{\stringequiv}{\overset{\circ}{=}}
\newcommand{\defeq}{\stackrel{\texttt{\tiny def}}{=}}

\newcommand{\charstring}[1]{{\color{colourcharacter}#1}}
\newcommand{\subwordstring}[1]{{\color{coloursubword}#1}}

\newcommand{\defn}[1]{\textbf{#1}}

\newcommand{\totcnt}{N^{\scaleto{\dataset}{4pt}}}
\newcommand{\replacementseq}{\subwords^{\mathrm{rep}}}

\crefname{section}{\S}{\S\S}
\Crefname{section}{\S}{\S\S}
\crefname{table}{Tab.}{Tabs.}
\crefname{figure}{Fig.}{Figs.}
\crefname{algorithm}{Alg.}{Algs.}
\crefname{equation}{Eq.}{Eqs.}
\crefname{appendix}{App.}{Apps.}
\crefname{defin}{Definition}{Definitions}
\crefname{lemma}{Lemma}{Lemmas}
\crefname{theorem}{Theorem}{Theorems}
\crefname{proposition}{Proposition}{Propositions}
\crefname{corollary}{Corollary}{Corollaries}
\crefformat{section}{\S#2#1#3}
\crefformat{footnote}{#2\footnotemark[#1]#3}

\DeclareMathOperator*{\argmin}{\mathrm{argmin}}
\DeclareMathOperator*{\argmax}{\mathrm{argmax}}

\newcommand{\countfunction}{n}

\newcommand{\np}{\texttt{NP}}

\myfunction{\mergefunc}{\texttt{merge}}

\newcommand{\BPE}{\texttt{BPE}\xspace}
\newcommand{\UnigramLM}{\texttt{UnigramLM}\xspace}
\newcommand{\GreedyLL}{\texttt{BottomUpLL}\xspace}
\newcommand{\CompMax}{\texttt{TopDownComp}\xspace}

\newcommand{\btheta}{\boldsymbol{\theta}}
\newcommand{\ptheta}{p_{\btheta}}
\newcommand{\opttokenise}{\tokenise_{\mathtt{opt}}}

\newcommand{\greedyalg}{bottom-up\xspace}
\newcommand{\pruningalg}{top-down\xspace}

\newcommand{\subwordbad}{\subword_{\text{\xmark}}}

\newcommand{\mergegood}{\merge^\star}
\newcommand{\bpb}{\textsc{BPB}\xspace}

\title{Objective vs.\ Search: Decomposing What Makes a Good Tokeniser}

\newcommand{\makesf}[1]{\textsf{{{#1}}}}
\newcommand{\ethemailadresstwo}[1]{\href{mailto:#1}{\makesf{#1}}}
\newcommand{\ethemailadress}[1]{\href{mailto:#1}{\makesf{#1}}}
\newcommand{\epflemailadress}[1]{\href{mailto:#1}{\makesf{#1}}}

\author{Ahmetcan Yavuz$^1$ \quad Clara Meister$^2$ \quad Tiago Pimentel$^1$ \\
  $^1$ETH Z\"urich, $^2$EPFL 
  \\
   \ethemailadresstwo{ayavuz@ethz.ch},\,
   \epflemailadress{clara.meister@epfl.ch},\,
  \ethemailadress{tiago.pimentel@inf.ethz.ch}
    \\
    \begin{tblr}{colspec = {Q[c,m] Q[c,m]}, colsep=10pt, stretch=0}
        \cincludegraphics[width=1.1em, keepaspectratio]{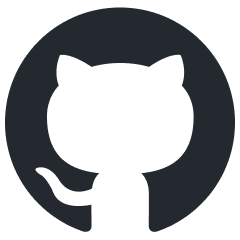} {\fontsize{11pt}{11.5pt}\selectfont\href{https://github.com/Ahmetcanyvz/comp-vs-like}{\myemph{Ahmetcanyvz/comp-vs-like}}}
        & \cincludegraphics[width=1.em, keepaspectratio]{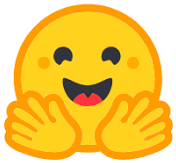} {\fontsize{11pt}{11.5pt}\selectfont\href{https://huggingface.co/collections/AhmetcanYvz/comp-vs-like}{\myemph{Ahmetcanyvz/comp-vs-like}}}
    \end{tblr}
  }

\begin{document}

\maketitle
\begin{abstract}
Two dominant tokenisation algorithms are used by modern language models: byte-pair encoding (\BPE) and \UnigramLM.
These differ along two orthogonal axes: their \emph{optimisation objective} (compression vs.\ log-likelihood) and their \emph{search procedure} (bottom-up merging vs.\ top-down pruning).
Existing comparisons confound these axes, making it unclear whether their observed differences stem from \emph{what} is being optimised vs. \emph{how} it is being optimised.
We disentangle the two by introducing two new tokenisation algorithms that complete this 2$\times$2 design space: \GreedyLL, a bottom-up likelihood-based tokeniser, and \CompMax, a top-down compression-based tokeniser.
We train language models with tokenisers produced by each algorithm, varying: model size, vocabulary sizes, and domain (English-only vs.\ multilingual).
Evaluating models on bits-per-byte, we find that the search procedure---not the objective---is the dominant factor: bottom-up tokenisers consistently achieve lower bits-per-byte in most settings.
Evaluating models on the BLiMP task, however, shows no consistent relationship between design choice and performance.
Overall, our results disentangle the effect of tokeniser design choices on language modelling performance, offering concrete guidance for their more principled construction.
\end{abstract}

\section{Introduction}

Before a language model (LM) processes text, its raw character string is mapped to a token string, which defines the model's input.
This mapping is performed by a \defn{tokeniser}, a foundational component of modern language modelling pipelines, used in virtually all state-of-the-art models \citep{qwenteam2026qwen35omnitechnicalreport, openai2025gptoss120bgptoss20bmodel, kimiteam2026kimik25visualagentic}.
A tokeniser is defined by several attributes.
Some, such as the vocabulary size, are set by the practitioner while others, such as the vocabulary itself, are learned from data by a \defn{tokeniser learning algorithm}.
Two such algorithms account for most tokenisers in current use:
 (i) byte-pair encoding (\BPE), which learns a vocabulary that maximises the compression of a training corpus \citep{gage1994new,sennrich-etal-2016-neural}; and (ii) \UnigramLM{}, which learns a vocabulary that maximises the unigram log-likelihood of a training corpus \citep{kudo-2018-subword}.

\begin{figure}[t]
  \centering
  \begin{adjustbox}{max width=\columnwidth}
  \begin{tikzpicture}[
      cell/.style={rounded corners=3pt, minimum width=2.75cm,
        minimum height=1.0cm, align=center, inner sep=3pt},
      existing/.style={cell, draw=boxgray!80, fill=boxgray!12},
      ours/.style={cell, draw=coloursubword, thick, fill=coloursubword!10},
      colhead/.style={align=center, font=\small},
      rowhead/.style={align=right, font=\small, anchor=east},
      axis/.style={font=\small\bfseries},
      badge/.style={font=\tiny\bfseries, text=white, fill=coloursubword,
        rounded corners=2pt, inner sep=2pt},
    ]
    \node[existing] (bpe) at (0,0)       {\BPE};
    \node[ours]     (cm)  at (3.05,0)    {\CompMax};
    \node[ours]     (gll) at (0,-1.3)    {\GreedyLL};
    \node[existing] (ulm) at (3.05,-1.3) {\UnigramLM};
    \node[badge] at (cm.north east)  {ours};
    \node[badge] at (gll.north east) {ours};
    \node[colhead] at (0,1.0)    {bottom-up\\[-1pt]{\footnotesize\color{black!60}(merging)}};
    \node[colhead] at (3.05,1.0) {top-down\\[-1pt]{\footnotesize\color{black!60}(pruning)}};
    \node[axis] at (1.525,1.8) {search procedure};
    \node[rowhead] at (-1.55,0)    {compression};
    \node[rowhead] at (-1.55,-1.3) {log-likelihood};
    \node[axis, rotate=90] at (-3.85,-0.65) {objective};
  \end{tikzpicture}
  \end{adjustbox}
  \caption{Design choice factorisation for tokenisation.}
  \label{fig:two-by-two}
  \vspace{-10pt}
\end{figure}

Despite their widespread usage, though, it remains unclear what makes a good tokeniser.
\BPE{} and \UnigramLM{} are used as the default algorithms; yet their design choices have received little direct examination, and upon closer inspection, confound many of the existing comparisons.
As a concrete example, the results of \citet{schmidt-etal-2024-tokenization}---that language models trained with \BPE{} often perform better than those trained with \UnigramLM{}---are commonly read as evidence that compression is a better \defn{optimisation objective} than unigram log-likelihood.
Optimisation objectives, though, are only one facet of a tokeniser learning algorithm.

\BPE and \UnigramLM also differ in the \defn{search procedure} used to maximize the objective.\footnote{The optimisation problem that such algorithms must solve is, in general, \np-hard \citep{kozma2024theoretical,whittington-etal-2025-tokenisationnpc,kastreva2026tokenisation}, so tokeniser learning algorithms typically approximate a solution with a heuristic search procedure.\looseness=-1}
Given a dataset, \BPE{} uses a bottom-up search, greedily merging symbols to maximise compression.
\UnigramLM{} relies on a top-down search, starting from an oversized vocabulary and iteratively pruning tokens to maximise log-likelihood.
\BPE{} and \UnigramLM{} therefore differ along two axes at once: the optimisation objective and the search procedure.
A comparison between the two cannot attribute the difference in performance to either axis alone.

In this work, we disentangle these two axes.
We introduce \GreedyLL, a bottom-up likelihood-based tokeniser that mirrors \BPE{}'s merge procedure but optimises corpus log-likelihood instead of compression.
We also introduce \CompMax, a top-down compression-based tokeniser that follows \UnigramLM{}'s pruning procedure but optimises compression.
Together with \BPE{} and \UnigramLM{}, these methods form a $2\times2$ study crossing objective (compression vs.\ log-likelihood) and search procedure (\greedyalg vs.\ \pruningalg); see \cref{fig:two-by-two}.

We train language models with tokenisers produced by all four algorithms, evaluating them in English-only and multilingual settings.
We analyse these models with a combination of extrinsic
and intrinsic evaluations.
Notably, our results show that search procedure has a stronger influence on bits-per-byte (BPB) than the training objective, with bottom-up tokenisers achieving the best BPB scores in nearly all experimental conditions (\BPE{} outperforms \CompMax{}, and \GreedyLL{} outperforms \UnigramLM{}).
This ordering does not carry over to grammaticality judgements (on a BLiMP task), though, where neither the objective nor the search procedure separates the tokenisers consistently.
Interestingly, at small vocabulary settings, however, the objective still matters, with likelihood-based tokenisers outperform compression-based ones.
Overall, our results highlight the importance of both optimisation objective and search procedure to tokeniser training algorithm design.

\section{Tokenisation}\label{sec:background}

Language modelling starts from raw text, which can be represented as \defn{character-strings} $\characters \in \alphabet^*$; 
these are finite sequences of characters $\characters = \charstring{\character_1 \character_2 \dots \character_{|\characters|}}$ over alphabet $\alphabet$.\footnote{
As usual, $\alphabet^*$ denotes the Kleene star of $\alphabet$, while $\alphabet^+$ denotes the set of all non-empty strings over $\alphabet$.
\looseness=-1}
A tokeniser's job is then to \emph{segment} these character-strings into \defn{token-strings} $\subwords \in \vocab^*$: sequences $\subwords = \subwordstring{\langle \subwordt{1}, \subwordt{2}, \dots, \subwordt{|\subwords|} \rangle}$, where each symbol $\subwordt{t}$ is called a token and represents a non-empty character span.
Whenever $\subwords$ is a segmentation of $\characters$, we say these strings are equivalent, which we denote as $\characters \stringequiv \subwords$, i.e.:\footnote{We restrict our discussion to \emph{lossless} tokenisers here.}
\begin{align}
    \characters \stringequiv \subwords
    \iff
    \characters = \subwordt{1} \circ \subwordt{2} \circ \dots \circ \subwordt{|\subwords|},
\end{align}
Notably, tokenisers are usually not allowed to output any possible segmentation of $\characters$, being instead constrained to segment it using only the tokens in a finite set called its \defn{vocabulary} $\vocab \subset \alphabet^+$.
To ensure that any character-string can be represented as tokens, we enforce $\alphabet \subseteq \vocab$.

How do we convert between character- and token-strings, though? Given just a vocabulary $\vocab$, there are multiple different ways a character-string could be mapped to items in that vocabulary. 
For example, if
$\vocab = \{\subwordstring{a}, \subwordstring{aa}, \subwordstring{aaa}\},$
then the character-string $\characters = \charstring{aaa}$ may be encoded as either
$\subwordstring{\langle a, aa \rangle}$,
$\subwordstring{\langle aa, a \rangle}$, or
$\subwordstring{\langle a, a, a \rangle}$, which we refer to as \defn{segmentations}.
Consequently, a tokeniser is defined not only by its vocabulary, but also by \emph{how} it maps character-strings to segmentations. 
This mapping is the job of an \defn{encoding function}, $\tokenise : \alphabet^* \to \vocab^*$, which segments the original character-string into tokens; by definition,
we have thus that $\characters \stringequiv \tokenise(\characters)$. 
Finally, a tokeniser also contains a \defn{decoding function}, $\detokenise : \vocab^* \to \alphabet^*$, which converts token-strings back into characters, being typically defined as the concatenation of each token's characters: $\detokenise(\subwords) \defeq \subwordt{1} \circ \subwordt{2} \circ \dots \circ \subwordt{|\subwords|}$.
Formally, we thus define a tokeniser as the tuple
$\tokeniser \defeq \langle \vocab, \detokenise, \tokenise \rangle$.

\newcommand{\countsnew}[2]{\countfunction_{#1}^{{\scaleto{#2}{4pt}}}}
\newcommand{\tokendelta}[2]{\Delta_{#1}^{#2}}

\section{Learning a Tokeniser: Optimisation Objective vs.\ Search Procedure}
\label{sec:objective}

How do we select such a tokeniser? Here, we cast the learning of a tokeniser as the combination of two design components: an objective function and a search procedure.

\subsection{Objective Functions}

Given a tokeniser $\tokeniser$ and a dataset $\dataset = \{\characters_m\}_{m=1}^{M}$, an objective function $\objectivefunc$ assigns a score to the tokeniser.
Different objectives encode different notions of what makes a tokeniser useful.
We focus on two standard objectives used by tokeniser learning algorithms: compression and log-likelihood.

Under the \defn{compression} objective, the quantity to be minimised is the total number of tokens that the tokeniser produces on the dataset:
\begin{align}
    \objectivefunccomp(\tokenise,\dataset)
    \defeq
    \sum_{\characters \in \dataset} |\tokenise(\characters)|,
\end{align}
This is appealing because shorter token strings improve effective context usage and reduce the number of model steps needed to process a dataset.

Under the \defn{log-likelihood} objective, the minimised quantity is the negative log-probability that a given probabilistic model $\ptheta$ assigns the corpus:
\begin{align}
    \objectivefuncll(\tokenise,\dataset)
    \defeq
    -\sum_{\characters \in \dataset}
    \log \ptheta\bigl(\tokenise(\characters)\bigr).
\end{align}
where $\ptheta$'s parameters $\btheta$ are themselves optimised to model this tokeniser's text.
This objective is likewise appealing, since log-likelihood is also the training objective of the language models for which these tokenisers are designed---so a log-likelihood optimal tokeniser would, by construction, lead to better models.
Optimising log-likelihood directly, however, is intractable, as this objective requires fully training language models $\ptheta$ for its evaluation.

In practice, to make this objective tractable, we typically significantly restrict the class of models $\ptheta$.
Here, we follow \UnigramLM{} in restricting $\ptheta$ to unigram language models.
For a fixed encoding function $\tokenise$, let $\countsnew{\subword}{\dataset}$ denote the count of token $\subword$ in the tokenised corpus,
and let
$\totcnt \defeq \sum_{\subword \in \vocab} \countsnew{\subword}{\dataset}$
be the total number of tokens.
We then define $\ptheta$ as
\begin{equation}
    \ptheta(\subwords)
    =
    \prod_{t=1}^{|\subwords|}
    \ptheta(\subwordt{t}),
    \qquad
    \ptheta(\subword)
    =
    \frac{\countsnew{\subword}{\dataset}}{\totcnt}.
\end{equation}
Concretely, we will thus use \defn{unigram log-likelihood} as the objective here, as opposed to `full' log-likelihood, and, for the rest of this paper, log-likelihood stands for unigram log-likelihood.

\vspace{-4pt}
\subsection{Search Procedures}\label{sec:search_proc}

Given one of these objectives and a fixed vocabulary budget $\vocabsize$, we might expect a tokeniser to be chosen by directly optimising
\begin{align}
    \opttokenise
    =
    \argmin_{\tokenise \in \tokenisespace_{\vocabsize}}
    \objectivefunc(\tokenise,\dataset),
\end{align}
where $\tokenisespace_{\vocabsize}$ denotes whichever class of tokenisers is under consideration.
Unfortunately, this optimisation problem is computationally intractable; the compression variant is provably \np-hard \citep{kozma2024theoretical,whittington-etal-2025-tokenisationnpc,kastreva2026tokenisation}, and we suspect the log-likelihood variant to be as well.
In practice, heuristic \defn{search procedures} are therefore required.
Such procedures specify both a restricted class of tokenisers and a strategy for selecting an element from that class.
We focus on two standard search procedures: \greedyalg construction and \pruningalg pruning.

The \defn{\greedyalg} procedure builds a tokeniser from the bottom up: the vocabulary starts as the alphabet $\alphabet$ and grows one token at a time
through \emph{merge operations}. 
Given a merge $\merge = \langle \subword', \subword'' \rangle$, applying it to a token-string rewrites every (non-overlapping) occurrence of bigram $\subword', \subword''$ into the token $\subword^{\texttt{new}} = \subword' \circ \subword''$.
A merge list $\merges = \merge_1,\dots,\merge_k$ then defines an encoding function
$\bottomuptoken[\merges]$: starting from $\characters$ as a token-string over $\alphabet$ (one token per character), it applies $\merge_1,\dots,\merge_k$ in order. The final token-string is our tokenised text.  
We write the tokeniser class defined by this procedure as $\tokenisespace_{\vocabsize}^{\uparrow}
    =
    \bigl\{
    \bottomuptoken[\merges]
    \mid
    \merges \in (\alphabet^+ \times \alphabet^+)^\vocabsize
    \bigr\}$.
To select an element from this class, we then start from an empty merge list, and greedily add one merge per step to it, locally optimising an objective $\objectivefunc$:
\begin{align}
    \mergegood
    =
    \argmin_{\merge \in \alphabet^+ \times \alphabet^+}
    \objectivefunc(
        \bottomuptoken[\merges_{<k}, \merge],
        \dataset
    ),
    \label{eq:bottomup_search}
\end{align}
where $\bottomuptoken[\merges_{<k}, \merge]$ denotes the encoding function obtained by applying the current merge list $\merges_{<k}$ followed by the candidate $\merge$.
The selected merge $\mergegood$ is then appended to the merge list. 
We refer to a candidate merge's effect on the objective as its \defn{merge gain}, and write it $\tokendelta{\subword^1,\subword^2}{\objectivefunc(\cdot,\dataset)}$.

By contrast, the \defn{\pruningalg} procedure starts from a large candidate vocabulary $\vocab_0$, with $|\vocab_0| \gg \vocabsize$, and removes tokens until $|\vocab| = |\alphabet| + \vocabsize$.
Notably, this procedure does not rely on merge lists,
but instead commits to an \defn{objective-optimal} encoding function $\directtoken[\vocab]$, determined by the vocabulary alone.
Under compression, $\directtoken[\vocab](\characters)$ is the shortest valid segmentation of $\characters$ using tokens from $\vocab$.
Under log-likelihood, it is instead the most probable segmentation of $\characters$ under $\ptheta$.\footnote{Given a fixed $\ptheta$, both cases can be computed efficiently. For log-likelihood, however, 
the parameters of $\ptheta$ itself depend on $\tokenise$, creating a circular definition. In practice, the two are estimated jointly as we explain in \cref{sec:method_topdown}.}
Top-down's tokeniser class can then be written as:
$
    \tokenisespace_{\vocabsize}^{\downarrow}
    =
    \bigl\{
    \directtoken[\vocab]
    \mid
    \vocab \subset \alphabet^+,
    \alphabet \subseteq \vocab,
    |\vocab| = |\alphabet| + \vocabsize
    \bigr\}
$.
To select an item in this class, we then start from a large vocabulary $\vocab_0$, from which we iteratively prune the least critical tokens, quantified as the tokens whose removal has the least negative impact on the objective we're optimizing for:
\begin{align}
    \subwordbad
    =
    \argmin_{\subword \in \vocab_{k-1} \setminus \alphabet}
    \objectivefunc(
        \directtoken[\vocab_{k-1} \setminus \{\subword\}],
        \dataset
    ).
    \label{eq:topdown_search}
\end{align}
Here, we index pruning steps by $k$, mirroring the bottom-up case.
This yields $\vocab_k = \vocab_{k-1} \setminus \{\subwordbad\}$. 
Importantly, to ensure every string remains encodable, alphabet tokens are never removed; pruning then continues until $|\vocab| = |\alphabet| + \vocabsize$.\footnote{We note that, for efficiency reasons, in practice, multiple tokens are often pruned in batches.} 
Analogously, we refer to a candidate deletion's effect on the objective as its \defn{deletion cost}, written $\tokendelta{\subword}{\objectivefunc(\cdot,\dataset)}$.

\vspace{-4pt}
\section{Analysed Tokenisers: Old and New}
\label{sec:method}

Combining the objectives and search procedures above, we get a $2\times2$ design space for tokenisation algorithms (illustrated in \cref{fig:two-by-two}).
Existing tokeniser learning algorithms, namely \BPE and \UnigramLM, cover only two of these cells.
Here, we introduce algorithms that instantiate the two other cells: \GreedyLL{}, a \greedyalg likelihood-based method, and \CompMax{}, a \pruningalg compression-based method.\footnote{Our new methods are closely related to, e.g., WordPiece or PathPiece. We discuss this prior work in detail in \cref{sec:prior_work}.\looseness=-1}
Before any of these methods can be run, however, a loose end remains.

In the previous section, we describe the search procedures as greedy: at each step, it picks the merge $\mergegood$ or deletion $\subwordbad$ which (locally) optimises an objective $\objectivefunc(\cdot,\dataset)$.
It did not specify, however, how this selection is performed.
Taken literally, computing the $\argmin$ operations in \cref{eq:bottomup_search,eq:topdown_search} would require running the objective function $\objectivefunc$ once per candidate to find the optimal one---a computationally impractical operation.
Rather than recomputing $\objectivefunc$ every time, most tokenisation algorithms work with \emph{incremental scores} instead: 
\begin{subequations}
\begin{align}
    \mergegood &=
    \argmax_{\langle \subword^1, \subword^2 \rangle \in \alphabet^+ \times \alphabet^+} \tokendelta{\subword^1,\subword^2}{\objectivefunc(\cdot, \dataset)} \\
    \subwordbad &=
    \,\,\,\,\,\, \argmin_{\subword \in \vocab_{k-1}\setminus\alphabet} 
    \,\,\,\,\,\,
    \tokendelta{\subword}{\objectivefunc(\cdot,\dataset)}
    \label{eq:topdown_delta_search}
\end{align}
\end{subequations}
Notably, these equations are equivalent to \cref{sec:search_proc}'s, as the pre-update objective values (e.g., $\objectivefunc(\bottomuptoken[\merges_{<k}], \dataset)$)  do not depend on the candidates, and are thus constant.
Each search procedure then derives a way to compute (or approximate) these values efficiently, as we show next.

\subsection{Bottom-up tokenisers: \BPE\ and \GreedyLL}
\label{sec:method_bottomup}

For a bottom-up method, each candidate is scored according to its merge gain $\tokendelta{\subword^1,\subword^2}{\objectivefunc(\cdot,\dataset)}$.
We now show how this score can be computed efficiently for the two objectives, starting with compression.

\begin{restatable}{lemma}{equivbpefreq}
\label{lemma:equivbpefreq}
    The change in compression due to merging a token-pair can be computed as:
    \begin{align}
        \tokendelta{\subword^1,\subword^2}{\objectivefunccomp(\cdot, \dataset)} = \countsnew{\subword^1, \subword^2}{\dataset}
    \end{align}
    where $\countsnew{\subword^1, \subword^2}{\dataset}$ denotes the number of non-overlapping $\langle \subword^1, \subword^2 \rangle$ sequences in $\dataset$.
\end{restatable}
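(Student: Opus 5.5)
The plan is a direct counting argument on the effect of a single merge on each tokenised string. I fix the current merge list $\merges_{<k}$ and, for each $\characters \in \dataset$, write $\subwords = \bottomuptoken[\merges_{<k}](\characters)$ for its token-string before the candidate merge. By definition of $\bottomuptoken$, the token-string after the candidate is $\subwords' = \bottomuptoken[\merges_{<k}, \merge](\characters)$, obtained by applying $\merge = \langle \subword^1, \subword^2 \rangle$ to $\subwords$ alone. The earlier merges are untouched, so nothing before the last step changes. The merge gain is then
\begin{align}
    \tokendelta{\subword^1,\subword^2}{\objectivefunccomp(\cdot,\dataset)} = \sum_{\characters \in \dataset} \bigl(|\subwords| - |\subwords'|\bigr),
\end{align}
with the sign convention that a positive gain means a reduction of the minimised objective. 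This matches its use in the $\argmax$ defining $\mergegood$.

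The key step is a per-string identity. Applying $\merge$ rewrites a fixed set of non-overlapping occurrences of the bigram $\subword^1,\subword^2$ in $\subwords$, chosen by the standard left-to-right scan. Each rewrite replaces two adjacent tokens with the single token $\subword^1 \circ \subword^2$, which lowers the length by exactly one. The occurrences are non-overlapping, so no token is consumed by two rewrites, and the rewrites act on disjoint positions and commute in their effect on length. Hence $|\subwords| - |\subwords'|$ equals the number of rewritten occurrences in $\subwords$. Summing over $\dataset$ gives the total number of non-overlapping $\langle \subword^1, \subword^2\rangle$ occurrences in the corpus as currently tokenised, which is $\countsnew{\subword^1,\subword^2}{\dataset}$.

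The only delicate point is pinning down what "number of non-overlapping sequences" means when $\subword^1 = \subword^2$. For example, in $\subwordstring{\langle a,a,a\rangle}$ there are two overlapping bigram occurrences but only one can be merged. I would handle this by defining $\countsnew{\subword^1,\subword^2}{\dataset}$ to be exactly the count produced by the same greedy left-to-right non-overlapping selection that the merge operation uses. With that definition, the lemma reduces to the length bookkeeping above. The proof should also state explicitly that counts are taken over the corpus tokenised by $\bottomuptoken[\merges_{<k}]$ rather than over raw characters.
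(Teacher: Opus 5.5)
Your proposal is correct and follows the paper's argument: each merged occurrence replaces two tokens by one, the number of merges equals the non-overlapping pair count in the corpus as tokenised by $\bottomuptoken[\merges_{<k}]$, so the corpus length drops by exactly $\countsnew{\subword^1,\subword^2}{\dataset}$. Your per-string bookkeeping and explicit handling of the $\subword^1=\subword^2$ case are slightly more careful than the paper's version, which instead also writes out the $\argmin$-to-$\argmax$ chain showing that this gain yields \BPE's most-frequent-pair merge rule.
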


\begin{proof}[Proof sketch]
    The full proof is in \Cref{app:equivbpefreq}.
    Replacing an occurrence of the token-pair $\merge = \langle \subword^1, \subword^2 \rangle$ with the merged token $\subword^1 \circ \subword^2$ saves one token.
    Further, the number of non-overlapping occurrences of a token-pair is its number of possible merges.
    Merging the pair therefore shortens the corpus by exactly $\countsnew{\subword^1, \subword^2}{\dataset}$ tokens, which is its gain under $\objectivefunccomp$.
\end{proof}

\Cref{lemma:equivbpefreq} gives the standard \BPE{} merge rule: the compression-optimal merge is the most (non-overlappingly) frequent token-pair.
In practice, the token-pair counts $\countsnew{\subword^1,\subword^2}{\dataset}$ are computed once and then updated incrementally after each merge, rather than recomputed from scratch (see \cref{app:update_counts}), which allows us to run \BPE efficiently.
We can derive an analogous equivalence for \GreedyLL{}.

\begin{restatable}{lemma}{equivgreedyll}
\label{lemma:equivgreedyll}
    The change in log-likelihood due to merging a token-pair can be computed as:\footnote{
For the case $\subword^1=\subword^2$, the same calculation applies albeit with a modified count update:
each merge consumes two occurrences of the same token $\subword^1$ and thus its count is reduced by $2\countsnew{\subword^1,\subword^1}{\dataset}$ instead; other parts of the update remain unchanged.}\looseness=-1
    \begin{align}
    &\tokendelta{\subword^1,\subword^2}{\objectivefuncll(\cdot, \dataset)} \!= 
    \\
    &\,\, 
    (\countsnew{\subword^1}{\dataset} - \countsnew{\subword^1,\subword^2}{\dataset})
    \log(\countsnew{\subword^1}{\dataset} - \countsnew{\subword^1,\subword^2}{\dataset}) 
    - \countsnew{\subword^1}{\dataset} \log \countsnew{\subword^1}{\dataset} \nonumber\\
    &\,\, + (\countsnew{\subword^2}{\dataset} - \countsnew{\subword^1,\subword^2}{\dataset})
    \log(\countsnew{\subword^2}{\dataset} - \countsnew{\subword^1,\subword^2}{\dataset}) \nonumber
    - \countsnew{\subword^2}{\dataset} \log \countsnew{\subword^2}{\dataset} \nonumber\\
    &\,\, + \countsnew{\subword^1,\subword^2}{\dataset}
    \log \countsnew{\subword^1,\subword^2}{\dataset} \nonumber\\
    &\,\, - (\totcnt - \countsnew{\subword^1,\subword^2}{\dataset})
    \log(\totcnt - \countsnew{\subword^1,\subword^2}{\dataset}) \nonumber + \totcnt\log \totcnt. \nonumber
    \end{align}
\end{restatable}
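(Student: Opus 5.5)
The plan is to write the unigram log-likelihood objective in closed form as a function of the token counts alone, and then compare that closed form before and after the merge. With the maximum-likelihood unigram parameters $\ptheta(\subword)=\countsnew{\subword}{\dataset}/\totcnt$, the corpus log-likelihood is
\begin{align}
    \sum_{\characters\in\dataset}\log\ptheta\bigl(\tokenise(\characters)\bigr)
    = \sum_{\subword\in\vocab}\countsnew{\subword}{\dataset}\log\countsnew{\subword}{\dataset} - \totcnt\log\totcnt,
\end{align}
since each occurrence of $\subword$ contributes $\log\countsnew{\subword}{\dataset}-\log\totcnt$ and $\sum_{\subword}\countsnew{\subword}{\dataset}=\totcnt$. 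The objective $\objectivefuncll$ is the negative of this, so the merge gain is (post-merge log-likelihood) minus (pre-merge log-likelihood). The parameters are re-estimated after the merge, consistent with $\btheta$ being fit to the tokeniser's own text.

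Next I would determine exactly how the counts change when $\merge=\langle\subword^1,\subword^2\rangle$ is applied, assuming $\subword^1\neq\subword^2$. Write $n\defeq\countsnew{\subword^1,\subword^2}{\dataset}$. Each of the $n$ non-overlapping occurrences replaces one $\subword^1$ and one $\subword^2$ by a single new token $\subword^1\circ\subword^2$. Hence the counts update as follows:
\begin{itemize}
    \item $\subword^1$ drops to $\countsnew{\subword^1}{\dataset}-n$;
    \item $\subword^2$ drops to $\countsnew{\subword^2}{\dataset}-n$;
    \item the new token gets count $n$;
    \item every other token's count is unchanged;
    \item $\totcnt$ becomes $\totcnt-n$, by \cref{lemma:equivbpefreq}.
\end{itemize}
The main subtle point, and what I expect to be the only real obstacle, is the new token's prior count. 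We need $\subword^1\circ\subword^2$ to have had count zero before the merge, so that its term goes from $0$ (with the convention $0\log 0=0$) to $n\log n$. I would argue this holds because, in the bottom-up procedure, a merged string enters the vocabulary only via its first merge. If the string already existed through a different merge, one would instead get $(c+n)\log(c+n)-c\log c$. I would state the assumption that merges create new vocabulary items, which holds whenever each concatenation is produced by exactly one merge in the list.

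Substituting these updates into the closed form, all terms for tokens other than $\subword^1$, $\subword^2$ and the new token cancel. What remains is three paired differences of $x\log x$ terms, plus the change in the normaliser, $-(\totcnt-n)\log(\totcnt-n)+\totcnt\log\totcnt$. This reproduces the displayed expression term by term. Finally, for $\subword^1=\subword^2$ as in the footnote, the same bookkeeping applies, except that the single token's count drops by $2n$. The two $\subword^1$ and $\subword^2$ lines then collapse into one term, $(\countsnew{\subword^1}{\dataset}-2n)\log(\countsnew{\subword^1}{\dataset}-2n)-\countsnew{\subword^1}{\dataset}\log\countsnew{\subword^1}{\dataset}$.
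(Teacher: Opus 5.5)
Your proposal is correct and follows the paper's proof essentially step for step. Both write the maximum-likelihood unigram log-likelihood as $\sum_{\subword}\countsnew{\subword}{\dataset}\log\countsnew{\subword}{\dataset}-\totcnt\log\totcnt$, apply the same count updates (including $\totcnt\mapsto\totcnt-\countsnew{\subword^1,\subword^2}{\dataset}$), and cancel the unchanged terms; your explicit assumption that $\subword^1\circ\subword^2$ had zero count before the merge is one the paper uses silently when it sets the new token's count to $\countsnew{\subword^1,\subword^2}{\dataset}$, so flagging it, together with working out the $\subword^1=\subword^2$ case, is a welcome tightening rather than a change of route.
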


\begin{proof}[Proof sketch]
    The full proof is in \Cref{app:greedyllgain}.
    Note that, if a token changes from count $\countfunction_1$ to count $\countfunction_2$, its contribution to the log-likelihood objective changes by: $\countfunction_2 \log \countfunction_2 - \countfunction_1 \log \countfunction_1$.
    The lemma then follows trivially, noting that:
    (i) the old tokens $\subword^1$ and $\subword^2$ change from, e.g., frequency $\countsnew{\subword^1}{\dataset}$ to $(\countsnew{\subword^1}{\dataset} - \countsnew{\subword^1,\subword^2}{\dataset})$;
    (ii) the new token $\subword^1 \circ \subword^2$ changes from frequency 0 to $\countsnew{\subword^1,\subword^2}{\dataset}$.
    Further, the total count of tokens serves as a normalisation factor in all tokens, and thus enters the log-likelihood with the opposite sign; its contribution changes from $\totcnt$ to $\totcnt - \countsnew{\subword^1,\subword^2}{\dataset}$.
\end{proof}

\GreedyLL{} can thus also be implemented efficiently, maintaining local count statistics and updating only affected candidate scores after each merge.
Interestingly, an analysis of the merge gain definition in \cref{lemma:equivgreedyll} shows that, for a token-pair to be selected: (i) the pair should occur often enough to matter, but (ii) it should also be favoured when its tokens co-occur more systematically than expected from their individual frequencies.

\vspace{-4pt}
\subsection{Top-down tokenisers: \UnigramLM\ and \CompMax}
\label{sec:method_topdown}

For \pruningalg methods, each candidate's score is given by the deletion cost $\tokendelta{\subword}{\objectivefunc(\cdot,\dataset)}$.
Unfortunately, this cost $\tokendelta{\subword}{\objectivefunc(\cdot,\dataset)}$ is non-trivial to compute.
This is due to the nature of the encoding function $\directtoken[\vocab]$ used by top-down algorithms.
Unlike in the bottom-up case, deleting a token here can change the preferred segmentation of an entire character-string, not only the local decomposition of the deleted token.\footnote{For example, under a compression objective, suppose
$\vocab \!=\! \alphabet \cup \{\subwordstring{abc}, \subwordstring{def}, \subwordstring{cdef}\}$.
Then, string $\charstring{abcdef}$ is segmented as
$\subwordstring{\langle abc, def\rangle}$.
If $\subwordstring{abc}$ is deleted, however, the shortest segmentation under the reduced vocabulary becomes
$\subwordstring{\langle a,b,cdef\rangle}$, which also changes the previously used token $\subwordstring{def}$.}
For the log-likelihood objective, the history gets worse.
The segmentations produced by $\directtoken[\vocab]$ depend on $\ptheta$, which is itself estimated from the token counts.
Deleting a token changes those counts, which changes $\ptheta$, which in turn changes how strings are segmented, which again changes the counts.
Computing a deletion's exact cost would thus require running this loop to convergence for each candidate token.\looseness=-1

To approximate $\tokendelta{\subword}{\objectivefunc(\cdot,\dataset)}$, we thus rely on a \defn{local replacement approximation}: when scoring the deletion of $\subword$, we keep the rest of the current model fixed and estimate the loss by only directly replacing occurrences of $\subword$ with an alternative segmentation under $\vocab\setminus\{\subword\}$.\footnote{Note that this introduces an asymmetry between our two search procedures: the \greedyalg{} merge gains in the previous section are exact, whereas the \pruningalg{} deletion costs are only approximate.
We emphasise that this asymmetry is not a design choice on our part, but an inherent property of \pruningalg{} search procedures, being itself present in \UnigramLM{}. 
We return to its implications in our Limitations section.}
For log-likelihood, this then yields the following result.

\begin{restatable}{lemma}{unigramdeletioncost}
\label{lemma:unigramdeletioncost}
Under a local replacement approximation, the estimated increase in the negative log-likelihood objective due to deleting a token $\subword \in \vocab \setminus \alphabet$ can be approximated as:\footnote{In practice, when computing this update, \UnigramLM keeps segmentations latent and leverages $\ptheta$ to compute $\countsnew{\subword}{\dataset}$ as the  \emph{expected} count of $\subword$, instead of its exact count under $\directtoken$.\looseness=-1}
\begin{align}
    \tokendelta{\subword}{\objectivefuncll(\cdot,\dataset)}
    \approx
    \countsnew{\subword}{\dataset}
    \left(
    \log \ptheta(\subword)
    -
    \log \ptheta(\replacementseq_{\subword})
    \right),
    \label{eq:unigram_deletion_cost}
\end{align}
where $\ptheta$ is the unigram model at step $k-1$, and $\replacementseq_{\subword}$ is $\subword$'s log-probability--optimal replacement segmentation, i.e.,
$\replacementseq_{\subword}
    \defeq \directtoken[\vocab_{k-1}\setminus\{\subword\}](\subword) $.
\end{restatable}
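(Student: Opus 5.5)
The plan is to make the local replacement approximation precise and then isolate its first-order effect on the objective. Concretely, the approximation says that after deleting $\subword$ we keep (i) the segmentation of every other part of the corpus and (ii) the unigram parameters $\ptheta$ from step $k-1$ fixed. Each of the $\countsnew{\subword}{\dataset}$ occurrences of $\subword$ in the current tokenisation $\directtoken[\vocab_{k-1}](\dataset)$ is then rewritten as the replacement segmentation $\replacementseq_{\subword} = \directtoken[\vocab_{k-1}\setminus\{\subword\}](\subword)$. This is a valid segmentation of the character span of $\subword$ under the reduced vocabulary, since $\alphabet \subseteq \vocab_{k-1}\setminus\{\subword\}$ ensures one exists.

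Under these assumptions, $\objectivefuncll = -\sum_{\characters}\sum_t \log \ptheta(\subwordt{t})$ decomposes additively over token occurrences. All terms not involving an occurrence of $\subword$ are therefore unchanged and cancel in the difference. For each occurrence of $\subword$, the contribution $-\log\ptheta(\subword)$ is replaced by $-\log \ptheta(\replacementseq_{\subword}) = -\sum_{\subword' \in \replacementseq_{\subword}} \log \ptheta(\subword')$, using the unigram factorisation of $\ptheta$ on token strings. Summing over the $\countsnew{\subword}{\dataset}$ identical occurrences gives the change
\[
\countsnew{\subword}{\dataset}\bigl(\log\ptheta(\subword) - \log\ptheta(\replacementseq_{\subword})\bigr),
\]
which is nonnegative. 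It is nonnegative because $\subword$ itself is one of the candidate segmentations of its own span under $\vocab_{k-1}$, while $\replacementseq_{\subword}$ is the best one available once $\subword$ is excluded. This gives the expression in the statement.

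The main obstacle is justifying what exactly is dropped, since the statement is an approximation rather than an identity. First, renormalising $\ptheta$ after deletion would change every token's probability, because $\totcnt$ and the counts shift. The approximation deliberately freezes $\btheta$, and I would note that re-estimating it could only lower the negative log-likelihood further. Second, neighbouring tokens could be resegmented across the boundaries of $\subword$, as in the earlier footnote's example. Restricting to within-span replacement gives a feasible, not necessarily optimal, segmentation under the reduced vocabulary, so it too only overestimates the true cost of the frozen-parameter problem. I would state both points explicitly so that ``$\approx$'' is read as this upper-bound style local estimate, rather than attempting to bound the error quantitatively.
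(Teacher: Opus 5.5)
Your derivation is essentially the paper's: freeze $\ptheta$ and every other segmentation decision, swap each occurrence's contribution $-\log\ptheta(\subword)$ for $-\log\ptheta(\replacementseq_{\subword})$ using the unigram factorisation, and multiply by $\countsnew{\subword}{\dataset}$. Your nonnegativity aside is not justified as written, though the lemma does not assert it: knowing that $\replacementseq_{\subword}$ is optimal among segmentations avoiding $\subword$ does not give $\ptheta(\subword)\ge\ptheta(\replacementseq_{\subword})$, since you also need $\subword$ itself to be a most probable segmentation of its own span under $\vocab_{k-1}$. That extra condition holds whenever $\countsnew{\subword}{\dataset}>0$ with exact counts under $\directtoken[\vocab_{k-1}]$, because otherwise swapping in $\replacementseq_{\subword}$ would improve an optimal segmentation, but it can fail with the expected counts mentioned in the footnote.
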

\begin{proof}[Proof sketch]
    The full proof is in \Cref{app:unigramdeletioncost}.
    Before deletion, each expected occurrence of $\subword$ contributes $-\!\log \ptheta(\subword)$ to the negative log-likelihood.
    Under the local replacement approximation, deleting $\subword$ replaces it with $\replacementseq_{\subword}$, which contributes $-\log \ptheta(\replacementseq_{\subword})$.
    As the number of such occurrences is $\countsnew{\subword}{\dataset}$, we get \cref{eq:unigram_deletion_cost}.
\end{proof}

After each deletion, we update our encoding function $\directtoken[\vocab_k]$ by re-estimating $\ptheta$.
We do this by updating $\btheta$ via an expectation--maximisation procedure, maximising the log-likelihood of $\dataset$.\footnote{The details of this procedure fall out of our work's scope; for an accessible derivation, see \citet{meister2026unigramlm}.} 
This updated $\ptheta$ defines the segmentations produced by $\directtoken$, which we use to re-segment our dataset.
We now present an analogous update for compression.

\begin{restatable}{lemma}{compmaxdeletioncost}
\label{lemma:compmaxdeletioncost}
Under a local replacement approximation, the estimated change in compression due to deleting a token $\subword \in \vocab \setminus \alphabet$ can be approximated as:\looseness=-1
\begin{align}
    \tokendelta{\subword}{\objectivefunccomp(\cdot,\dataset)}
    =
    \countsnew{\subword}{\dataset}
    \left(
    |\replacementseq_{\subword}| - 1
    \right),
    \label{eq:compmax_deletion_cost}
\end{align}
where $\replacementseq_{\subword}$ is the compression-optimal replacement of $\subword$ after deletion:
$ \replacementseq_{\subword}
    \defeq \directtoken[\vocab_{k-1}\setminus\{\subword\}](\subword) $.
\end{restatable}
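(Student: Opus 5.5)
We will prove \cref{lemma:compmaxdeletioncost} by following the template of \cref{lemma:unigramdeletioncost}, replacing per-token log-probabilities with per-token lengths. The argument has three steps: decompose the compression objective by token type, apply the local replacement approximation to isolate what changes, and then account for the occurrences of $\subword$.

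\textbf{Step 1: decompose the objective.} Fix the current vocabulary $\vocab_{k-1}$ and the current segmentation of $\dataset$ under $\directtoken[\vocab_{k-1}]$. Write the compression objective as a sum of counts over token types:
\begin{align}
    \objectivefunccomp(\directtoken[\vocab_{k-1}],\dataset) = \sum_{\subword' \in \vocab_{k-1}} \countsnew{\subword'}{\dataset} = \totcnt.
\end{align}
Here $\countsnew{\subword'}{\dataset}$ is the number of occurrences of $\subword'$ in the current tokenised corpus. This identity holds because each emitted token contributes exactly one unit to $\sum_{\characters}|\tokenise(\characters)|$.

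\textbf{Step 2: apply the local replacement approximation.} Under this approximation, deleting $\subword$ leaves every token occurrence other than those of $\subword$ untouched. Each occurrence of $\subword$ spans exactly the character-string $\subword$. It is therefore rewritten in place as $\replacementseq_{\subword} = \directtoken[\vocab_{k-1}\setminus\{\subword\}](\subword)$, the shortest segmentation of $\subword$ over the reduced vocabulary.

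This replacement is well defined. Since $\subword \notin \alphabet$ and $\alphabet \subseteq \vocab_{k-1}\setminus\{\subword\}$, the character-level segmentation always exists. Hence $\replacementseq_{\subword}$ exists and satisfies $1 \le |\replacementseq_{\subword}| \le |\subword|$. In fact $|\replacementseq_{\subword}| \ge 2$, since $\subword$ itself is no longer available.

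The result is a valid segmentation of every $\characters \in \dataset$ over $\vocab_{k-1}\setminus\{\subword\}$. Concatenation is preserved, so the segmentation remains lossless.

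\textbf{Step 3: count.} Each of the $\countsnew{\subword}{\dataset}$ occurrences of $\subword$ loses one token and gains $|\replacementseq_{\subword}|$ tokens. All other counts are fixed, so the new total is $\totcnt + \countsnew{\subword}{\dataset}(|\replacementseq_{\subword}|-1)$. Subtracting the pre-deletion value $\totcnt$ gives \cref{eq:compmax_deletion_cost}.

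The two sides match exactly once the approximation is applied, which is why the lemma uses ``$=$'' rather than ``$\approx$''. The statement is exact for the locally replaced segmentation. The only approximation is relative to the true cost, which would re-run $\directtoken$ globally.

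\textbf{Main obstacle.} The delicate point is making the local replacement approximation precise enough that the bookkeeping is unambiguous. Concretely, we must ensure that the replacement of one occurrence never interacts with adjacent tokens, and that the count of other token types changes only through the insertion of $\replacementseq_{\subword}$. Since we only count total tokens, that second change is irrelevant here.

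We will state this explicitly as the definition of the approximation. We will also remark that the approximated value upper-bounds the true change in compression: the true $\directtoken[\vocab_{k-1}\setminus\{\subword\}]$ yields segmentations no longer than the locally replaced ones, since it minimises length over a set containing them. This remark justifies the approximation as a pessimistic deletion cost.
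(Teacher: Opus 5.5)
Your proposal is correct and follows essentially the same argument as the paper: each of the $\countsnew{\subword}{\dataset}$ occurrences of $\subword$ contributes one token before deletion and $|\replacementseq_{\subword}|$ tokens after local replacement, and subtracting gives the result. Your extra remarks are all valid and go slightly beyond what the paper states: the replacement is well defined because $\alphabet \subseteq \vocab_{k-1}\setminus\{\subword\}$, $|\replacementseq_{\subword}|\ge 2$, and the locally replaced segmentation is feasible under the reduced vocabulary, so the estimate upper-bounds the true change in $\objectivefunccomp$.
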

\begin{proof}[Proof sketch]
    The full proof is in \Cref{app:compmaxdeletioncost}.
    After deleting $\subword$, the local replacement uses $|\replacementseq_{\subword}|$ tokens instead, thus increasing the corpus length by $|\replacementseq_{\subword}|-1$.
    There are $\countsnew{\subword}{\dataset}$ such occurrences.
\end{proof}

Updating this compression-based tokeniser after a deletion is easier than for log-likelihood. 
No parameters need to be estimated: the vocabulary $\vocab_k$ alone determines the encoding function, so we simply re-segment the corpus under $\directtoken[\vocab_k]$.

\newcommand{\pmi}{\texttt{PMI}\xspace}

\vspace{-4pt}
\section{Connection to Prior Work}
\label{sec:prior_work}

As mentioned above, the tokenisation learning algorithms we propose are not entirely without precedent, and related to prior methods like WordPiece and PathPiece. 
In this section, we discuss this connections in detail.

\vspace{-4pt}
\subsection{\GreedyLL{} and Likelihood-Guided Merging}
\label{sec:prior_greedyll}

As noted in \cref{sec:method_bottomup}, \GreedyLL{} favours pairs of tokens that are not only frequent, but which co-occur more often than chance.
In fact, let the pointwise mutual information between two tokens be:\looseness=-1
\begin{subequations}
\begin{align}
    \operatorname{PMI}_{\mathrm{adj}}(\subword^1,\subword^2)
    &\defeq
    \log
    \frac{
        \ptheta(\subword^1,\subword^2)
    }{
        \ptheta(\subword^1)\,\ptheta(\subword^2)
    } \\
    &=
    \log
    \frac{
        \totcnt \, \countsnew{\subword^1,\subword^2}{\dataset}
    }{
        \countsnew{\subword^1}{\dataset}
        \countsnew{\subword^2}{\dataset}
    }.
    \label{eq:adj_pmi}
\end{align}
\end{subequations}
Relying on the PMI to operationalise this notion of more-often-than-chance co-occurrence, we can use a Taylor approximation of $\tokendelta{\subword^1,\subword^2}{\objectivefuncll(\cdot,\dataset)}$ to make this explanation of \GreedyLL{} explicit.
\begin{restatable}{lemma}{greedyllpmi}
\label{lemma:greedyllpmi}
Under a first-order Taylor approximation, the change in log-likelihood due to merging a token-pair can be approximated as:
\begin{align}
    \tokendelta{\subword^1,\subword^2}{\objectivefuncll(\cdot,\dataset)}
    \approx
    \countsnew{\subword^1,\subword^2}{\dataset}
    \left(
    \operatorname{PMI}_{\mathrm{adj}}(\subword^1,\subword^2)
    - 1
    \right).
    \label{eq:greedyll_pmi_corrected}
\end{align}
\end{restatable}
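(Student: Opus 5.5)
We start from the exact merge gain in \cref{lemma:equivgreedyll} and apply a first-order Taylor expansion to each of its three ``count-change'' terms. Throughout, write $n = \countsnew{\subword^1,\subword^2}{\dataset}$, $n_1 = \countsnew{\subword^1}{\dataset}$, $n_2 = \countsnew{\subword^2}{\dataset}$ and $N = \totcnt$. The regime of interest is $n \ll n_1, n_2, N$. We treat $f(x) = x\log x$ as the basic function and use $f'(x) = \log x + 1$. This gives
\begin{align}
    f(x - n) - f(x) \approx -n(\log x + 1),
\end{align}
which we apply with $x \in \{n_1, n_2, N\}$.

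The key steps, in order, are as follows.

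(i) Group the exact expression from \cref{lemma:equivgreedyll} into four pieces: $f(n_1 - n) - f(n_1)$, then $f(n_2 - n) - f(n_2)$, then $f(n)$, and finally $-\bigl(f(N-n) - f(N)\bigr)$, which matches the last two terms of the lemma.

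(ii) Substitute the linearisation into the three difference terms, giving $-n(\log n_1 + 1) - n(\log n_2 + 1) + n\log n + n(\log N + 1)$.

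(iii) Collect terms. The logarithms combine to $n\log\frac{N n}{n_1 n_2}$, and the constants combine to $n(-1 - 1 + 1) = -n$. The logarithmic part is exactly $n\,\operatorname{PMI}_{\mathrm{adj}}(\subword^1,\subword^2)$ by \cref{eq:adj_pmi}, so the total is $n(\operatorname{PMI}_{\mathrm{adj}} - 1)$, as claimed.

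The term $f(n) = n\log n$ is kept exactly, since it comes from a count changing from $0$, where a Taylor expansion around $0$ is not meaningful. We should say this explicitly, so that ``first-order Taylor approximation'' refers only to the three expansions around $n_1$, $n_2$ and $N$.

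The main obstacle is bookkeeping rather than depth. First, the signs must be tracked correctly. The $N$ term enters with the opposite sign, and the merge gain must be read with the paper's sign convention, where a positive gain means an increase in log-likelihood. Otherwise the final expression comes out negated.

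Second, the case $\subword^1 = \subword^2$ needs a remark. Following the footnote to \cref{lemma:equivgreedyll}, the count decreases by $2n$, so the linearisation gives $-2n(\log n_1 + 1)$. This reproduces the same formula once $n_1 n_2$ is read as $n_1^2$.

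We would also note briefly that the neglected second-order remainders are of order $n^2/x$ for $x \in \{n_1, n_2, N\}$. This justifies the approximation when the pair count is small relative to the unigram counts.
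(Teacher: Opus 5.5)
Your proposal is correct and matches the paper's proof step for step: both linearise $f(x)=x\log x$ around the unigram counts of $\subword^1$ and $\subword^2$ and around $\totcnt$, keep the new token's $\countsnew{\subword^1,\subword^2}{\dataset}\log\countsnew{\subword^1,\subword^2}{\dataset}$ term exact, and collect the logarithms into $\operatorname{PMI}_{\mathrm{adj}}$ and the constants into $-1$. Your extra check of the case $\subword^1=\subword^2$, where the count drops by twice the pair count and the product of unigram counts becomes a square, is also correct; the paper's proof does not spell this case out.
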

\begin{proof}[Proof sketch]
    The full proof is in \Cref{app:greedyllpmi}.
    Starting from \cref{lemma:equivgreedyll}, apply a first-order Taylor expansion independently to each $x\log x$ term.
    Collecting the resulting terms returns the equation above.
\end{proof}%

\Cref{eq:greedyll_pmi_corrected} is closely related to the objective of another famous \greedyalg{} tokeniser, \defn{WordPiece}.
WordPiece's definition, however, is far from unified and has shifted over time.
As originally proposed by \citet{schuster-nakajima-2012-voice}, WordPiece selects the unit that most increases the likelihood of the data---i.e., using the same objective as \GreedyLL{}---but approximates the local objective, selecting multiple merges in parallel and only approximately updating their model $\ptheta$.
WordPiece was later popularised by \citet{wu2016google}, whose description suggests, in two consecutive sentences, optimising either a log-likelihood or a compression objective.%
\footnote{``The wordpiece model is generated using a data-driven approach to maximize the language-model likelihood
of the training data [...] Given a training corpus and a number of desired tokens $D$, the optimization problem is to select $D$ wordpieces such that the resulting corpus is minimal in the number of wordpieces when segmented according to the chosen wordpiece model.'' \citep{wu2016google}}
Accordingly, the widely used HuggingFace implementation of WordPiece optimises for compression \citep{wolf-etal-2020-transformers}.
Finally, WordPiece is often described
as merging the pair with the highest $\operatorname{PMI}_{\mathrm{adj}}$ \citep{schmidt-etal-2024-tokenization,lesci-etal-2025-causal,hf-llm-course-wordpiece}, which relates to \cref{eq:greedyll_pmi_corrected} but drops the $\countsnew{\subword^1,\subword^2}{\dataset}$ factor.
\GreedyLL{} is thus closest to the original WordPiece's description, but differs from all existing versions of this method.\looseness=-1

Importantly, even closer to \cref{eq:greedyll_pmi_corrected} is S-BPE \citep{vilar-federico-2021-statistical}, which scores merges by a count-weighted PMI criterion.
The two expressions are not identical, as S-BPE omits the $-1$ term, but are otherwise equivalent.
We experiment with a tokenisation learning algorithm based on \cref{lemma:greedyllpmi} in some of our experiments, which we then label as \GreedyLL $\approx$; when no such label is present, \GreedyLL refers to the method described by \cref{lemma:equivgreedyll}. 

\subsection{\CompMax{} and Top-Down Compression}
\label{sec:prior_compmax}

The closest prior method to \CompMax{} is PathPiece \citep{schmidt-etal-2024-tokenization}, which likewise prunes a large initial vocabulary under a compression objective.
The two differ in the deletion score: when scoring the removal of a token $\subword$, PathPiece also considers re-segmentation of other vocabulary tokens that contain $\subword$ as a substring, whereas \CompMax{} considers only $\subword$'s own local replacement.
Adopting the PathPiece variant would change the algorithm in more ways than just the objective, so we keep \CompMax{} as the direct compression counterpart of \UnigramLM{}, differing from it in objective alone.

\begin{table*}[t]
\centering
\adjustbox{max width=\textwidth}{%
\begin{tabular}{lllcccccccccc}
\toprule
Vocab\!\!\!\! & Tokeniser & Objective & Search & $\objectivefunccomp$ $\downarrow$ & $\objectivefuncll$ $\downarrow$ & Entropy & Zipf $\alpha$ & BPT $\uparrow$ & Tok Len & Vocab Util $\uparrow$ & Cov. 50\% \\
\midrule

\multirow{4}{*}{8k}
& \BPE{}        & $\objectivefunccomp$ & bottom-up & 59,559,835 & $6.259 \times 10^{8}$ & 10.508 & 1.033 & 3.844 & 5.53 & \textbf{99.9\%} & 242 \\
& \CompMax{}    & $\objectivefunccomp$ & top-down  & \textbf{59,304,113} & $6.259 \times 10^{8}$ & 10.554 & 0.919 & \textbf{3.861} & 5.67 & 99.3\% & 268 \\
& \GreedyLL{}   & $\objectivefuncll$   & bottom-up & 61,170,714 & $\bm{6.167 \times 10^{8}}$ & 10.081 & 1.315 & 3.743 & 6.05 & 99.4\% & 152 \\
& \UnigramLM{}  & $\objectivefuncll$   & top-down  & 68,444,350 & $6.221 \times 10^{8}$ &  9.089 & 1.093 & 3.345 & 6.72 & 99.3\% & 39 \\
\midrule

\multirow{4}{*}{32k}
& \BPE{}        & $\objectivefunccomp$ & bottom-up & \textbf{50,327,096} & $5.544 \times 10^{8}$ & 11.016 & 1.195 & \textbf{4.549} & 6.52 & \textbf{99.8\%} & 269 \\
& \CompMax{}    & $\objectivefunccomp$ & top-down  & 52,204,230 & $5.667 \times 10^{8}$ & 10.856 & 1.074 & 4.386 & 6.29 & \textbf{99.8\%} & 192 \\
& \GreedyLL{}   & $\objectivefuncll$   & bottom-up & 50,920,853 & $\bm{5.505 \times 10^{8}}$ & 10.811 & 1.378 & 4.496 & 6.98 & 99.0\% & 211 \\
& \UnigramLM{}  & $\objectivefuncll$   & top-down  & 58,161,017 & $5.697 \times 10^{8}$ &  9.796 & 1.272 & 3.937 & 7.03 & \textbf{99.8\%} & 51 \\
\midrule

\multirow{4}{*}{128k}
& \BPE{}        & $\objectivefunccomp$ & bottom-up & \textbf{46,859,565} & $5.216 \times 10^{8}$ & 11.132 & 1.484 & \textbf{4.886} & 7.06 & 98.9\% & 204 \\
& \CompMax{}    & $\objectivefunccomp$ & top-down  & 50,312,483 & $5.484 \times 10^{8}$ & 10.900 & 1.443 & 4.551 & 6.01 & \textbf{99.8\%} & 149 \\
& \GreedyLL{}   & $\objectivefuncll$   & bottom-up & 47,111,733 & $\bm{5.203 \times 10^{8}}$ & 11.044 & 1.574 & 4.860 & 7.48 & 96.8\% & 191 \\
& \UnigramLM{}  & $\objectivefuncll$   & top-down  & 55,546,423 & $5.552 \times 10^{8}$ &  9.996 & 1.717 & 4.122 & 6.46 & 97.1\% & 53 \\
\bottomrule
\end{tabular}}
\vspace{-3pt}
\caption{Intrinsic tokenisation results on a held-out test set (47{,}384 documents). Columns are defined in the Evaluation paragraph of \Cref{sec:setup}. \Cref{tab:app_distribution_eng_vs_multi} in \Cref{app:english_vs_multi} reports the multilingual counterpart of this table.}
\label{tab:tokenizer_stats_grouped}
\vspace{-10pt}
\end{table*}

\section{Experimental Setup}
\label{sec:setup}

We will now compare our four tokenisers: \BPE{}, \GreedyLL{}, \CompMax{}, and \UnigramLM{}.
All four are trained with an identical preprocessing pipeline (NFC normalisation followed by a byte-level pretokeniser using the GPT-2 regex) and on the same corpus, so that any difference between them stems only from the objective and the search procedure. \Cref{app:tokeniser_details} gives the exact configuration under which we train our tokenisers.
Full language model hyperparameter and architectural details are in \Cref{app:hyperparams}.

\paragraph{English setup.}
For English experiments, we train our tokenisers on a subset with approximately 2B tokens from FineWeb-Edu \citep{lozhkov2024fineweb-edu}, using vocabulary sizes of 8k, 32k, and 128k.
We then again use FineWeb-Edu to train language models using these tokenisers, evaluating 100M, 300M, 500M, and 1B parameter models.\footnote{Full hyperparameter and architectural details are in \Cref{app:hyperparams}.}
Results for 100M-parameter models are averaged over three random seeds, while larger models use one run per configuration.
Finally, unless otherwise stated, models are trained with a Chinchilla-style token budget of approximately $20\times$ as many training tokens as model parameters.\looseness=-1

\paragraph{Multilingual setup.}
Our multilingual corpus contains 20B tokens across five languages: 10B English tokens from FineWeb-Edu \citep{lozhkov2024fineweb-edu}, and 2.5B tokens each of German, Spanish, Turkish, and Chinese from FineWeb2 \citep{penedo2025fineweb2pipelinescale}.
We train our tokenisers on a 10\% subsample of this corpus, using a single vocabulary size of 128k.
We then train 1B-parameter language models on the full corpus.

\paragraph{Evaluation.}
We report both extrinsic and intrinsic metrics.
For \emph{extrinsic} metrics, we report our language models' bits per byte (\bpb), computed on a held-out validation set, 
and minimal-pair grammatical accuracy, computed on BLiMP \citep{warstadt2020blimp}, MultiBLiMP \citep{jumelet-etal-2026-multiblimp}, and ZhoBLiMP \citep{liu2025systematicassessmentlanguagemodels}.
For \emph{intrinsic} metrics, we report $\objectivefunccomp$ and $\objectivefuncll$. We also report the unigram entropy of our tokenised corpus, its bytes per token (BPT), average vocab token len (Tok Len), vocabulary utilisation (Vocab Util), Zipf $\alpha$, and Coverage 50\%. Detailed definitions are in \cref{app:metric_defs}.\looseness=-1

\vspace{-4pt}
\section{Results}
\label{sec:tokenizer_comparison}

\vspace{-4pt}
\subsection{Intrinsic Evaluation}

\paragraph{Compression vs.\ Log-likelihood.}
\cref{tab:tokenizer_stats_grouped} presents both the $\objectivefunccomp$ and $\objectivefuncll$ achieved by our tokenisers. 
From this table, we can see that, for a small vocabulary (of $8k$ tokens), the compression and log-likelihood scores achieved by the different tokenisers seem to align with their objective function: \BPE and \CompMax achieve better compression, while \UnigramLM and \GreedyLL achieve better log-likelihood.
For larger vocabularies, however, this surprisingly does not hold, with the bottom-up tokenisers consistently achieving both better compression and log-likelihood, irrespective of their objective function.
Another result of interest is with respect to a tokeniser's entropy, which, intuitively, measures how evenly a tokeniser spreads the frequency of its tokens.
This quantity has a strong relationship to our optimisation objectives, which can be seen when we rewrite it:
\begin{subequations}
\begin{align}
    \mathrm{Ent}(\tokenise, \dataset) &\defeq - \sum_{\subword \in \vocab} \ptheta(\subword) \log \ptheta(\subword) \\
    &= \frac{\objectivefuncll(\tokenise, \dataset)}{\objectivefunccomp(\tokenise, \dataset)}
\end{align}
\end{subequations}
This follows directly from the definitions: writing $\ptheta(\subword) = \countsnew{\subword}{\dataset}/\totcnt$, and noting that $\totcnt$, the total number of tokens, is exactly $\objectivefunccomp(\tokenise, \dataset)$, we get $-\sum_{\subword} \ptheta(\subword) \log \ptheta(\subword) = -\frac{1}{\totcnt} \sum_{\subword} \countsnew{\subword}{\dataset} \log \ptheta(\subword) = \objectivefuncll(\tokenise, \dataset) / \totcnt$.
Looking at the entropy values in \cref{tab:tokenizer_stats_grouped} we see a similar behaviour as before: for small vocabularies, entropy results align with the choice of tokenisation objective; for a larger vocabulary, however, top-down methods achieve consistently lower entropy.
Future work should further investigate this relationship between objective and search procedure.\looseness=-1

\begin{figure*}[t]
\centering
\includegraphics[width=\textwidth]{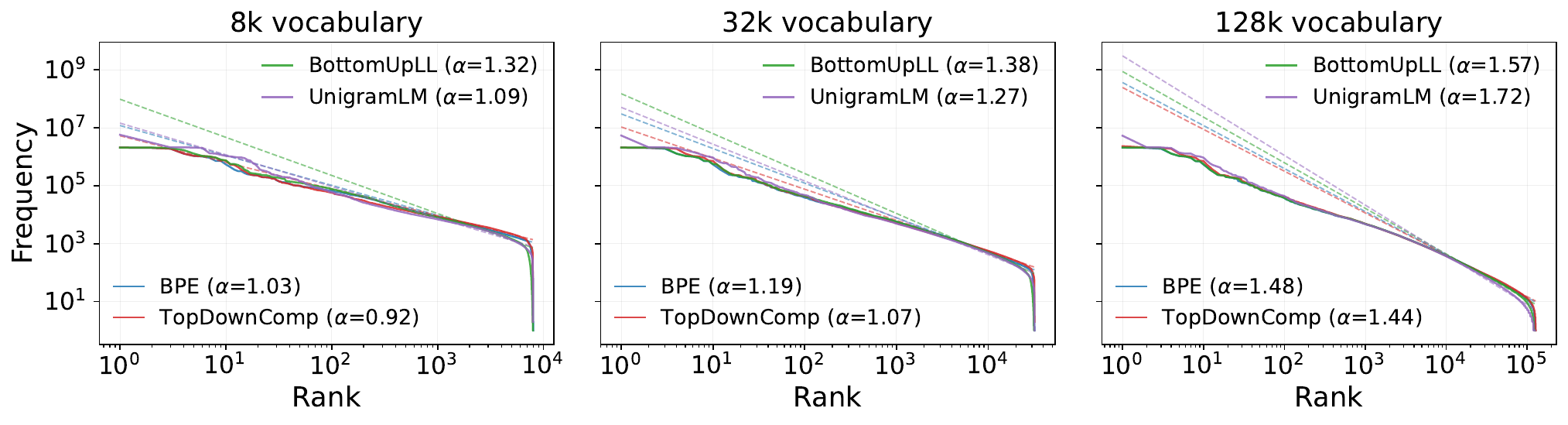}
\vspace{-25pt}
\caption{Frequency--rank token distributions induced by each tokeniser on the held-out test set. Solid lines are empirical curves; dashed lines are the fitted power laws, whose negated slopes are the Zipf $\alpha$ values in \Cref{tab:tokenizer_stats_grouped}.}
\label{fig:zipf}
\end{figure*}

\begin{figure}[t]
\centering
\includegraphics[width=\columnwidth]{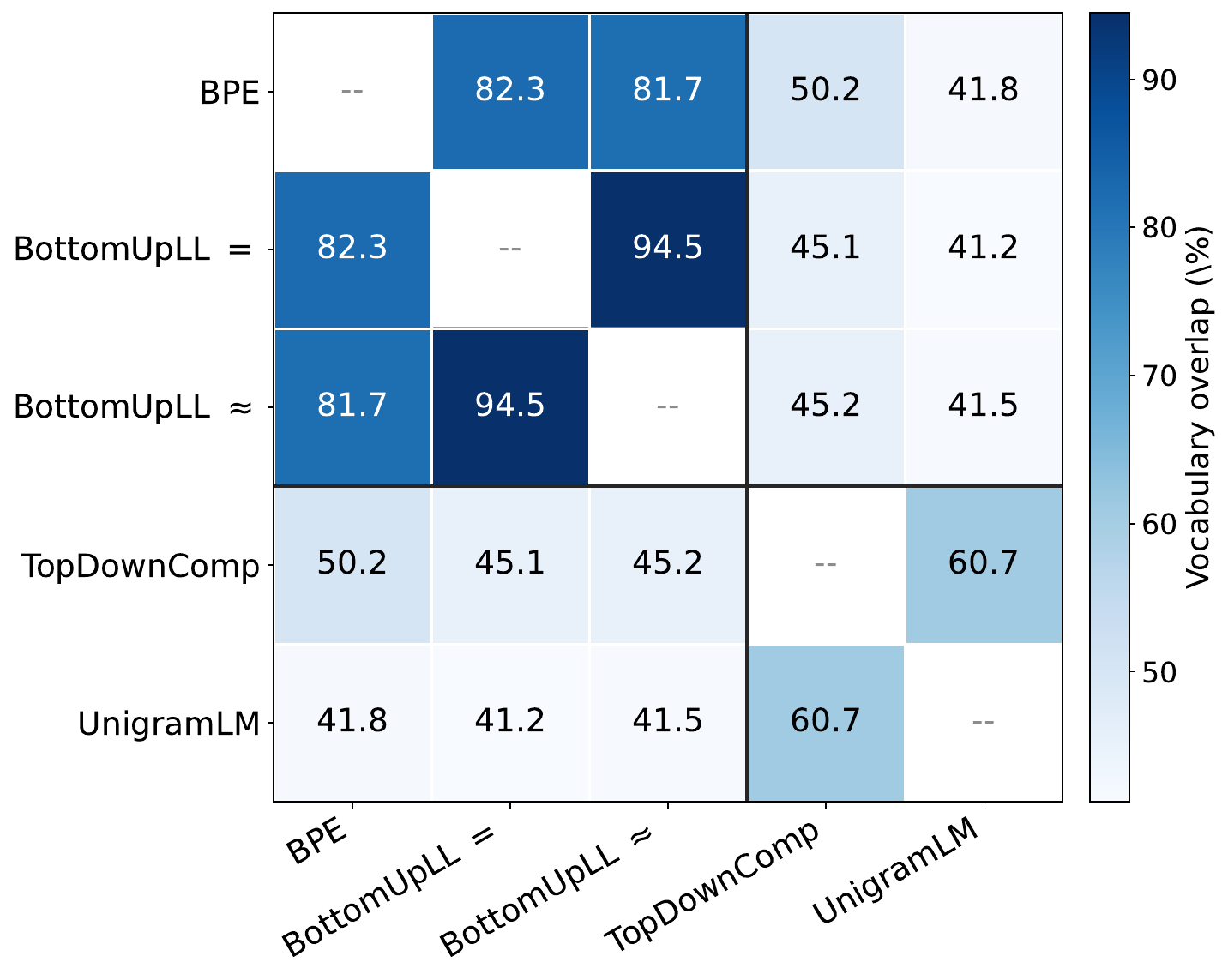}
\vspace{-15pt}
\caption{Pairwise vocabulary overlap at 128k vocabulary size. 
\GreedyLL{} $=$ denotes the exact variant of this method, which scores merges with \cref{lemma:equivgreedyll}, while \GreedyLL{} $\approx$ denotes the approximate variant, which uses the \pmi{}-based approximation of \cref{lemma:greedyllpmi}.
See \cref{tab:app_vocab_overlap_multi} in \Cref{app:english_vs_multi} for a multilingual counterpart of this table.}
\label{tab:vocab_overlap}
\vspace{-4pt}
\end{figure}

\paragraph{Distributional Properties of Tokens.}
\Cref{tab:tokenizer_stats_grouped} also reports our tokenisers' Zipf $\alpha$ and Coverage 50\%, where
Zipf $\alpha$ is the negated slope of a frequency vs.\ rank log-log curve,
and Coverage 50\% is the smallest $\ell$ such that the $\ell$ most frequent tokens together account for at least half of all token occurrences in our corpus.
A larger Zipf $\alpha$ thus means that token frequency decreases more rapidly with rank, while lower Coverage 50\% values indicate that token occurrences are concentrated among fewer token types.
\Cref{fig:zipf} shows corresponding frequency--rank curves.
Within each search family, the likelihood-based method induces a more concentrated token-frequency distribution than the compression-based method: \GreedyLL{} has higher Zipf $\alpha$, lower entropy, and lower Coverage 50\% than \BPE{}, and \UnigramLM{} shows the same pattern relative to \CompMax{}.
Thus, while the search procedure may dictate the values of $\objectivefunccomp$ and $\objectivefuncll$, the used objective is still visible in a tokenised corpus' empirical token-frequency distribution.

\newcommand{\std}[1]{{\footnotesize $\pm$ #1}}

\paragraph{Other Intrinsic Metrics.}
Finally, \cref{tab:tokenizer_stats_grouped} also presents our tokenisers' vocabulary utilisation---i.e., the proportion of vocabulary items used at least once on a test set of 47{,}384 documents---as well as the average length of the tokens in their vocabularies.
Vocabulary utilisation is close to 100\% at 8k and 32k, but at 128k, likelihood-based tokenisers have more unused tokens; \Cref{app:merge_dynamics} traces this to the intermediate tokens that \GreedyLL{} creates during merging and then never uses again.
Token length behaves differently across vocabulary sizes:
at small vocabulary sizes, \pruningalg methods present longer tokens, as they can retain these tokens from the initial seed vocabulary;
at larger vocabulary sizes, though, \greedyalg methods present longer tokens instead, as they can build these tokens from scratch through successive merges.
Interestingly, across both search procedures, the likelihood-based tokenisers have longer tokens than their compression-based counterparts.

\paragraph{Vocabulary Composition.}
We now move on to analysing the composition of our tokenisers' vocabularies.
To this end, we compute their similarity, defined as $\nicefrac{|\vocab_A \cap \vocab_B|}{|\vocab_A|}$.
\Cref{tab:vocab_overlap} presents these results.
From this table, we see that learned vocabularies cluster primarily by search procedure: e.g., \greedyalg methods share substantially more tokens with each other than with \pruningalg methods.
This indicates that a tokeniser's search procedure has a large effect on which tokens are included in its final vocabulary.
\Cref{fig:venn5} in \Cref{app:vocab_venn} shows the full set-overlap structure of all five vocabularies.
Furthermore, per-example qualitative segmentations for the four tokenisers are shown in \Cref{app:qual_ex}.

\vspace{-4pt}
\subsection{Extrinsic Evaluation}

\begin{table*}[t]
\centering
\adjustbox{max width=\textwidth}{%
\begin{tabular}{lllcc cc cc cc cc}
\toprule
& & & & \multicolumn{2}{c}{100M} & \multicolumn{2}{c}{300M} & \multicolumn{2}{c}{500M} & \multicolumn{2}{c}{1B} \\
\cmidrule(lr){5-6}\cmidrule(lr){7-8}\cmidrule(lr){9-10}\cmidrule(lr){11-12}
Vocab\!\!\!\! & Tokeniser & Objective & Search & BLiMP $\uparrow$ & \bpb $\downarrow$ & BLiMP $\uparrow$ & \bpb $\downarrow$ & BLiMP $\uparrow$ & \bpb $\downarrow$ & BLiMP $\uparrow$ & \bpb $\downarrow$ \\
\midrule
\multirow{4}{*}{8k}
& \BPE{}        & $\objectivefunccomp$ & bottom-up & .686 \std{6e-3} & 1.0736 \std{2e-4} & .7707 & .8739 & .7744 & .8435 & -- & -- \\
& \CompMax{}    & $\objectivefunccomp$ & top-down  & .687 \std{4e-3} & 1.0750 \std{8e-4} & .7699 & .8758 & .7938 & .8441 & -- & -- \\
& \GreedyLL{}   & $\objectivefuncll$   & bottom-up & \textbf{.703 \std{8e-3}} & \textbf{1.0715 \std{6e-4}} & \textbf{.7841} & \textbf{.8727} & \textbf{.7951} & \textbf{.8420} & -- & -- \\
& \UnigramLM{}  & $\objectivefuncll$   & top-down  & .698 \std{5e-3} & 1.0819 \std{4e-4} & .7769 & .8758 & .7676 & .8472 & -- & -- \\
\midrule
\multirow{4}{*}{32k}
& \BPE{}        & $\objectivefunccomp$ & bottom-up & .714 \std{8e-3} & 1.0392 \std{5e-4} & \textbf{.7904} & .8584 & .7958 & .8290 & -- & -- \\
& \CompMax{}    & $\objectivefunccomp$ & top-down  & \textbf{.727 \std{2e-2}} & 1.0394 \std{3e-4} & .7895 & .8581 & .7986 & .8307 & -- & -- \\
& \GreedyLL{}   & $\objectivefuncll$   & bottom-up & .717 \std{7e-3} & \textbf{1.0374 \std{1e-4}} & .7827 & \textbf{.8564} & .7899 & \textbf{.8281} & -- & -- \\
& \UnigramLM{}  & $\objectivefuncll$   & top-down  & .719 \std{8e-3} & 1.0441 \std{9e-4} & .7776 & .8610 & \textbf{.8015} & .8337 & -- & -- \\
\midrule
\multirow{4}{*}{128k}
& \BPE{}        & $\objectivefunccomp$ & bottom-up & \textbf{.740 \std{4e-3}} & \textbf{1.0138 \std{4e-4}} & .797 & \textbf{.8462} & .813 & .8210 & .805 & .7790 \std{4e-4} \\
& \CompMax{}    & $\objectivefunccomp$ & top-down  & \textbf{.740 \std{8e-3}} & 1.0172 \std{4e-4} & .797 & .8492 & \textbf{.815} & .8226 & .816 & .7816 \std{3e-4} \\
& \GreedyLL{}   & $\objectivefuncll$   & bottom-up & .725 \std{6e-3} & 1.0149 \std{9e-4} & .792 & .8467 & .798 & \textbf{.8207} & .799 & \textbf{.7803 \std{26e-4}} \\
& \UnigramLM{}  & $\objectivefuncll$   & top-down  & .738 \std{7e-3} & 1.0298 \std{8e-4} & \textbf{.799} & .8547 & .803 & .8283 & \textbf{.818} & .7872 \std{1e-4} \\
\bottomrule
\end{tabular}}
\caption{Extrinsic tokenisation results for language models trained in English. Cells marked ``--'' were not run; 1B models were trained only at 128k vocabulary. 100M results are averaged over three random seeds; for the 1B models, \bpb is likewise averaged over three seeds (42, 43, 44), while BLiMP uses a single seed. Values following $\pm$ are standard deviations across seeds. Lower is better for \bpb; higher is better for BLiMP.}
\label{tab:main_results}
\end{table*}

\begin{table*}[t]
\centering
\adjustbox{max width=\textwidth}{%
\begin{tabular}{lllcccccccccccc}
\toprule
&&&& \multicolumn{5}{c}{Minimal-pair accuracy $\uparrow$} & \multicolumn{5}{c}{\bpb $\downarrow$} \\
\cmidrule(lr){5-9}\cmidrule(lr){10-14}
Tokeniser & Objective & Search & BLiMP $\uparrow$ & eng & deu & spa & tur & cmn & eng & deu & spa & tur & cmn \\
\midrule
\BPE{}        & $\objectivefunccomp$ & bottom-up & \textbf{.812} & \textbf{.974} & .955 & .958 & .890 & .799 & \textbf{.8080} & .9426 & \textbf{.9025} & .8936 & 1.2464 \\
\CompMax{}    & $\objectivefunccomp$ & top-down  & .808 & .969 & \textbf{.966} & .956 & .853 & .718 & .8276 & .9711 & .9284 & .9260 & 1.3430 \\
\GreedyLL{}   & $\objectivefuncll$   & bottom-up & .808 & .970 & .952 & .950 & .873 & \textbf{.803} & .8082 & \textbf{.9416} & .9030 & \textbf{.8929} & \textbf{1.2436} \\
\UnigramLM{}  & $\objectivefuncll$   & top-down  & .803 & .973 & .964 & \textbf{.959} & \textbf{.897} & .728 & .8311 & .9683 & .9303 & .9252 & 1.3575 \\
\bottomrule
\end{tabular}}
\caption{Extrinsic tokenisation results for 1B parameter models trained in a multilingual setting. Minimal-pair accuracy is evaluated on BLiMP \citep{warstadt2020blimp}, MultiBLiMP \citep[eng, deu, spa, tur;][]{jumelet-etal-2026-multiblimp}, and ZhoBLiMP \citep[cmn;][]{liu2025systematicassessmentlanguagemodels}. Lower is better for \bpb; higher is better for minimal-pair accuracy.}
\label{tab:main_results_multi}
\end{table*}

\paragraph{\bpb.}
\Cref{tab:main_results,tab:main_results_multi} present extrinsic results for both the English and multilingual settings.
From these tables, we can see that \greedyalg tokenisers consistently achieve lower \bpb scores than \pruningalg.
This holds true for all our experimental conditions, with the only exception of the English 300M models at 32k vocabulary, where 
\CompMax{} outperforms \BPE{} by a negligible margin ($0.0003$ \bpb); the likelihood-based pair (\GreedyLL{} vs.\ \UnigramLM{}) favours \greedyalg{} in every setting.\footnote{Because several of these differences are small, we additionally verify their significance with a paired document-level bootstrap over the evaluation set, which confirms the statistical significance of each of these comparisons (see details in \Cref{app:significance}).}
Comparing tokenisers within a single search procedure, we see that the choice of objective is still relevant, albeit less strongly.
Compression-based methods tend to outperform log-likelihood; but there are several exceptions where \GreedyLL outperforms all other methods.\looseness=-1

\paragraph{Minimal-pair accuracy.}
\Cref{tab:main_results,tab:main_results_multi} also present scores on a minimal-pair grammaticality task, for both the English and multilingual settings.
Notably, these results do not show the same clear preference as \bpb for \greedyalg over \pruningalg.
In fact, in the English setting, \pruningalg methods often outperform the \greedyalg ones, with \UnigramLM outperforming \GreedyLL and \CompMax outperforming \BPE for the 1B models.
In the multilingual setting, no consistent pattern arises either, and all tokenisers perform best in at least one language.
Our results in German, Spanish, and Turkish, however, suggest that \pruningalg methods may be helpful for morphologically rich languages.
More broadly, our multilingual minimal-pair accuracy results suggest different tokenisers induce different inductive biases, which will in turn be uniquely suited to different languages; we leave this direction to future work.

\section{Conclusion}

Our paper disentangles two important design choices in tokeniser learning algorithms: the objective used to score a tokeniser and the search procedure used to optimise it.
We analyse two objectives (compression vs.\ log-likelihood) and search procedures (\greedyalg vs.\ \pruningalg), introducing two new tokenisers in the process: \GreedyLL{} and \CompMax{}.
In nearly all experimental conditions, language models trained with \greedyalg tokenisers outperformed \pruningalg ones in terms of bits-per-byte.
Our intrinsic evaluations also highlighted a surprising pattern: \greedyalg tokenisers produced both more compressed and higher log-likelihood token-strings than \pruningalg tokenisers, independent of the used objective function.
The used objective still affects the resulting tokeniser, though, with log-likelihood objective leaving a clear impact on the empirical rank--frequency distribution of its tokens.
Overall, our results show that a tokeniser's effect on language modelling cannot be explained by the objective alone, and that the used search procedure is an important component of tokeniser design.
We hope future work will explore the impact of other optimisation objectives and search procedures\footnote{Investigating the search procedures of, e.g.,\citet{tempus2026tokenisationconvexrelaxations} or \citet{chizhov-etal-2024-bpe}.} in language modelling.

\section*{Limitations}

We list a few limitations with our study here.

\paragraph{Scale.}
First, we only train language models up to 1B parameters, and  results could shift for larger models or longer training horizons.

\paragraph{Languages.}
Second, our multilingual experiments cover five selected languages: English, German, Spanish, Turkish, and Chinese.
Four of these use the Latin script and all are relatively high-resource, so our multilingual results should be read as applying to these five languages rather than to multilingual tokenisation in general.
How our findings transfer to low-resource languages, and to scripts with markedly different orthographic conventions, remains an open question that we leave to future work.

\paragraph{Single seed at scale.}
Third, for computational reasons, the 300M and 500M models are trained with a single seed per configuration, as are the multilingual 1B models; the 100M and English 1B models are averaged over three seeds; small differences in results across tokenisers should thus be interpreted with caution.

\paragraph{Tokeniser-training corpus.}
Fourth, in all our experiments, each tokeniser is trained on the same corpus as the language model trained on top of it. We thus do not analyse the impact of training models on data that differ in domain or style from its tokeniser.

\paragraph{Approximate top-down deletion scores.}
Fifth, as discussed in \cref{sec:method_topdown}, \pruningalg{} deletion costs rely on the \emph{local replacement approximation}, whereas \greedyalg{} merge gains are exact for the current step.
We asses the impact of this approximation by, at a small scale, training tokenisers for which we compute exact deletion scoring.
We find that the local-replacement-approximation tokenisers present an $81.5\%$ vocabulary overlap with these exact ones. 
Whether this holds at the vocabulary sizes used in our main experiments, however, remains open.
A second approximation we rely on for \pruningalg methods is that we \emph{prune multiple tokens} per round rather than one at a time.
We assess the impact of this approximation by re-training the \pruningalg{} tokenisers with pruning rates of $1\%$ and $0.1\%$ of the vocabulary per round, instead of the $10\%$ used in our main experiments: the resulting vocabularies overlap with the $10\%$ ones by over $97\%$ for \UnigramLM{} and over $99\%$ for \CompMax{}, suggesting that this approximation has a limited effect on the learned vocabulary.
Details for both experiments are in \Cref{app:prune_rate}.

\paragraph{Evaluation.}
Finally, we evaluate models with BPB and minimal-pair grammatical benchmarks (BLiMP, MultiBLiMP, ZhoBLiMP). We do not measure other downstream task performance, reasoning ability, or long-context behaviour; how our models perform in those different settings is thus left open.

\section*{Acknowledgements}

We thank Philip Whittington for helpful discussions and feedback throughout this project, and the anonymous reviewers for their constructive comments. This work was supported as part of the ``Swiss AI initiative'' by a grant from the Swiss National Supercomputing Centre (CSCS) under project ID a0229 on Alps.

\bibliography{custom}

\newpage
\appendix

\section{Proof of \texorpdfstring{{\Cref{lemma:equivbpefreq}}}{Lemma}}
\label{app:equivbpefreq}

\equivbpefreq*
\begin{proof}
    To prove this is the case, first note that replacing a token-pair $\merge = \langle \subword^1, \subword^2 \rangle$ with merged token $\subword^1 \circ \subword^2$ saves exactly 1 token.
    As the number of non-overlapping occurrences of a token-pair is also the number of possible merges, it is equivalent to the compression this token-pair would achieve:
    \begin{align}
        &\objectivefunc(\bottomuptoken[\merges_{<k}], \dataset)  - \objectivefunc(\bottomuptoken[\merges_{<k} \circ \langle \subword^1, \subword^2 \rangle], \dataset) \nonumber\\
        &\qquad\qquad\qquad= \countsnew{\subword^1, \subword^2}{\dataset}.
    \end{align}
    We can now trivially conclude the proof:
\begin{subequations}
    \begin{align}
        \mergegood
        &= \argmin_{\merge \in \alphabet^+ \times \alphabet^+}
            \objectivefunc(\bottomuptoken[\merges_{<k} \circ \merge], \dataset) \\
        &= \argmax_{\merge \in \alphabet^+ \times \alphabet^+}
            -\objectivefunc(\bottomuptoken[\merges_{<k} \circ \merge], \dataset) \\
        &= \argmax_{\merge \in \alphabet^+ \times \alphabet^+}
            \objectivefunc(\bottomuptoken[\merges_{<k}], \dataset) \\
        &\qquad\qquad\qquad\quad - \objectivefunc(\bottomuptoken[\merges_{<k} \circ \merge], \dataset) \nonumber \\
        &= \argmax_{\langle \subword^1, \subword^2 \rangle \in \alphabet^+ \times \alphabet^+}
            \countsnew{\subword^1, \subword^2}{\dataset}.
    \end{align}
\end{subequations}
This concludes the proof.
\end{proof}

\section{Updating Counts in \greedyalg Procedures}
\label{app:update_counts}

Both \BPE{} and \GreedyLL{} repeatedly select a token-pair based on its merge gain on the current tokenised corpus.
A naive implementation would recompute token and token-pair counts after each merge, which is expensive on large corpora.
Instead, our implementation maintains these counts incrementally and updates only the parts of the corpus affected by the selected merge.

The corpus is represented as a collection of unique word types, each with an associated frequency.
Each word type stores its current tokenisation using a linked token structure: every token stores pointers to its predecessor and successor, allowing merges to be applied locally without reconstructing the full token string.
We maintain a mapping
\begin{align}
    & \mathrm{pair\_to\_words}:
    \langle \subword^1,\subword^2\rangle  \\
    & \qquad \mapsto
    \{
    \text{word types containing }
    \langle \subword^1,\subword^2\rangle
    \}, \nonumber
\end{align}
which maps each token-pair to the word types in which that pair currently occurs.
We also maintain a reverse index from tokens to the token-pairs containing them, which allows us to identify which candidate scores may be affected after a merge.

When a merge $\merge=\langle \subword^1,\subword^2\rangle$ is selected, write
\[
    \subword^{\mathrm{new}}
    \defeq
    \subword^1 \circ \subword^2 .
\]
Each affected word type is scanned through its linked token sequence, and non-overlapping occurrences of $\langle \subword^1,\subword^2\rangle$ are replaced by $\subword^{\mathrm{new}}$.
The non-overlap condition is enforced during this scan: once an occurrence has been merged, the scan skips over the newly created token, so overlapping occurrences of the same pair are not counted twice.

Locally, an occurrence in context
\begin{align}
    \langle
    \subword^\ell,
    \subword^1,
    \subword^2,
    \subword^r
    \rangle
\end{align}
is replaced by
\begin{align}
    \langle
    \subword^\ell,
    \subword^{\mathrm{new}},
    \subword^r
    \rangle,
\end{align}
where $\subword^\ell$ and $\subword^r$ denote the immediate left and right neighbours, if they exist.
Therefore, only token-pairs in this local neighbourhood can change.
The old token-pairs
\begin{align}
    \langle \subword^\ell,\subword^1\rangle,
    \qquad
    \langle \subword^1,\subword^2\rangle,
    \qquad
    \langle \subword^2,\subword^r\rangle
\end{align}
are removed, and the new token-pairs
\begin{align}
    \langle \subword^\ell,\subword^{\mathrm{new}}\rangle,
    \qquad
    \langle \subword^{\mathrm{new}},\subword^r\rangle
\end{align}
are added.
Boundary cases are handled by omitting updates involving missing neighbours.

For one merged occurrence, the affected pair counts are updated as
\begin{subequations}
\begin{align}
    \countsnew{\subword^\ell,\subword^1}{\dataset}
    &\leftarrow
    \countsnew{\subword^\ell,\subword^1}{\dataset} - 1,
    \\
    \countsnew{\subword^1,\subword^2}{\dataset}
    &\leftarrow
    \countsnew{\subword^1,\subword^2}{\dataset} - 1,
    \\
    \countsnew{\subword^2,\subword^r}{\dataset}
    &\leftarrow
    \countsnew{\subword^2,\subword^r}{\dataset} - 1,
    \\
    \countsnew{\subword^\ell,\subword^{\mathrm{new}}}{\dataset}
    &\leftarrow
    \countsnew{\subword^\ell,\subword^{\mathrm{new}}}{\dataset} + 1,
    \\
    \countsnew{\subword^{\mathrm{new}},\subword^r}{\dataset}
    &\leftarrow
    \countsnew{\subword^{\mathrm{new}},\subword^r}{\dataset} + 1.
\end{align}
\end{subequations}
When word types have frequencies, these increments and decrements are weighted by the corresponding word-type frequency.

For \GreedyLL{}, we also update unigram counts, since the merge score in \cref{lemma:equivgreedyll} depends on both token counts and pair counts.
If $\countsnew{\subword^1,\subword^2}{\dataset}$ denotes the frequency-weighted number of non-overlapping occurrences of the selected pair that are actually merged, then the unigram counts are updated as
\begin{subequations}
\begin{align}
    \countsnew{\subword^1}{\dataset}
    &\leftarrow
    \countsnew{\subword^1}{\dataset}
    -
    \countsnew{\subword^1,\subword^2}{\dataset},
    \\
    \countsnew{\subword^2}{\dataset}
    &\leftarrow
    \countsnew{\subword^2}{\dataset}
    -
    \countsnew{\subword^1,\subword^2}{\dataset},
    \\
    \countsnew{\subword^{\mathrm{new}}}{\dataset}
    &\leftarrow
    \countsnew{\subword^1,\subword^2}{\dataset},
    \\
    \totcnt
    &\leftarrow
    \totcnt
    -
    \countsnew{\subword^1,\subword^2}{\dataset}.
\end{align}
\end{subequations}
For \BPE{}, the affected candidate scores are simply the updated pair counts, as in \cref{lemma:equivbpefreq}.
For \GreedyLL{}, affected candidate scores are then recomputed using the local likelihood-change expression in \cref{lemma:equivgreedyll}.

The priority queue of candidate token-pairs is maintained lazily.
After a merge, the selected pair is removed from the pair-to-word index, affected reverse-index entries are updated, and affected token-pairs are pushed back onto the heap with fresh scores.
Old heap entries are not removed immediately.
Instead, stale entries are rejected when popped: the implementation checks the current count and score, and discards the heap entry if it no longer matches the current state.
The heap is also periodically rebuilt to control the accumulation of stale entries.\looseness=-1

This gives an incremental implementation in which the cost of a merge is proportional to the number of affected word types and local token updates, plus the cost of heap maintenance.
In practice, this is much cheaper than recomputing all token and token-pair counts over the full corpus after every merge.

\section{Proof of \texorpdfstring{{\Cref{lemma:equivgreedyll}}}{Lemma}}
\label{app:greedyllgain}

{\renewcommand{\footnote}[1]{}
\equivgreedyll*}

\begin{proof}
We prove the result by explicitly writing the corpus log-likelihood before and after applying the merge.
Let the current tokenised corpus contain $\totcnt$ tokens in total.
For each token $\tilde{\subword}$, let $\countsnew{\tilde{\subword}}{\dataset}$ denote its corpus count.
Under the unigram token model, the empirical probability of $\tilde{\subword}$ is
\begin{align}
    \ptheta(\tilde{\subword})
    =
    \frac{\countsnew{\tilde{\subword}}{\dataset}}{\totcnt}.
\end{align}
Therefore, the corpus log-likelihood before applying the merge is
\begin{subequations}
\begin{align}
    \mathcal{L}_{\mathrm{before}}
    &=
    \sum_{\tilde{\subword}}
    \countsnew{\tilde{\subword}}{\dataset}
    \log \ptheta(\tilde{\subword}) \\
    &=
    \sum_{\tilde{\subword}}
    \countsnew{\tilde{\subword}}{\dataset}
    \log
    \frac{\countsnew{\tilde{\subword}}{\dataset}}{\totcnt} \\
    &=
    \sum_{\tilde{\subword}}
    \countsnew{\tilde{\subword}}{\dataset}
    \log \countsnew{\tilde{\subword}}{\dataset}
    -
    \totcnt \log \totcnt.
    \label{eq:greedyll_before}
\end{align}
\end{subequations}

Now consider merging the token-pair
$\langle \subword^1,\subword^2\rangle$
into the new token
$\subword^1 \circ \subword^2$.
For readability, write
\begin{align}
    \kappa
    \defeq
    \countsnew{\subword^1,\subword^2}{\dataset},
\end{align}
Assuming $\subword^1 \neq \subword^2$, applying this merge changes only the counts of $\subword^1$, $\subword^2$, and the new token $\subword^1 \circ \subword^2$.
Specifically,
\begin{subequations}
\begin{align}
    \countsnew{\subword^1}{\dataset}
    &\mapsto
    \countsnew{\subword^1}{\dataset} - \kappa,
    \\
    \countsnew{\subword^2}{\dataset}
    &\mapsto
    \countsnew{\subword^2}{\dataset} - \kappa,
    \\
    \countsnew{\subword^1 \circ \subword^2}{\dataset}
    &\mapsto
    \kappa.
\end{align}
\end{subequations}
All other token counts remain unchanged.
Since each merged occurrence replaces two tokens by one token, the total token count changes from $\totcnt$ to $\totcnt-\kappa$.
The log-likelihood after the merge is therefore
\begin{align}
    \mathcal{L}_{\mathrm{after}}
    &=
    \sum_{\tilde{\subword}
    \notin
    \{\subword^1,\subword^2,\subword^1\circ\subword^2\}}
    \countsnew{\tilde{\subword}}{\dataset}
    \log \countsnew{\tilde{\subword}}{\dataset}
    \\
    &\quad
    +
    \bigl(
    \countsnew{\subword^1}{\dataset}
    -
    \kappa
    \bigr)
    \log
    \bigl(
    \countsnew{\subword^1}{\dataset}
    -
    \kappa
    \bigr)
    \nonumber\\
    &\quad
    +
    \bigl(
    \countsnew{\subword^2}{\dataset}
    -
    \kappa
    \bigr)
    \log
    \bigl(
    \countsnew{\subword^2}{\dataset}
    -
    \kappa
    \bigr)
    \nonumber\\
    &\quad
    +
    \kappa \log \kappa
    -
    (\totcnt-\kappa)\log(\totcnt-\kappa). \nonumber
    \label{eq:greedyll_after}
\end{align}
and the log-likelihood change due to the merge is
\begin{align}
    \tokendelta{\subword^1,\subword^2}{\objectivefuncll(\cdot,\dataset)}
    =
    \mathcal{L}_{\mathrm{after}}
    -
    \mathcal{L}_{\mathrm{before}}.
\end{align}
Substituting \cref{eq:greedyll_before,eq:greedyll_after}, all unchanged token-count terms cancel, leaving
\begin{align} \label{eq:greedyll_exact}
    \tokendelta{\subword^1,\subword^2}{\objectivefuncll(\cdot,\dataset)}
    &=
    \bigl(
    \countsnew{\subword^1}{\dataset}
    -
    \kappa
    \bigr)
    \log
    \bigl(
    \countsnew{\subword^1}{\dataset}
    -
    \kappa
    \bigr)
    \\
    &\quad
    -
    \countsnew{\subword^1}{\dataset}
    \log
    \countsnew{\subword^1}{\dataset}
    \nonumber\\
    &\quad
    +
    \bigl(
    \countsnew{\subword^2}{\dataset}
    -
    \kappa
    \bigr)
    \log
    \bigl(
    \countsnew{\subword^2}{\dataset}
    -
    \kappa
    \bigr)
    \nonumber\\
    &\quad
    -
    \countsnew{\subword^2}{\dataset}
    \log
    \countsnew{\subword^2}{\dataset}
    \nonumber\\
    &\quad
    +
    \kappa \log \kappa
    \nonumber\\
    &\quad
    -
    (\totcnt-\kappa)\log(\totcnt-\kappa)
    +
    \totcnt\log\totcnt. \nonumber
\end{align}
which completes the proof.
\end{proof}

\section{Proof of \texorpdfstring{{\Cref{lemma:unigramdeletioncost}}}{Lemma}}
\label{app:unigramdeletioncost}

{\renewcommand{\footnote}[1]{}
\unigramdeletioncost*}

\begin{proof}
We prove the result under the local replacement approximation stated in the lemma.
That is, when scoring the deletion of $\subword$, we keep the current token probabilities $\ptheta$ fixed, keep all other segmentation decisions fixed, and approximate the effect of deletion by replacing each expected occurrence of $\subword$ with a replacement segmentation under $\vocab \setminus \{\subword\}$.

For \UnigramLM{}, token usage is measured by expected counts under the current posterior over valid segmentations. Thus, $\countsnew{\subword}{\dataset}$ is the expected number of times $\subword$ is used in the corpus. Before deleting $\subword$, each such expected occurrence contributes
\begin{align}
    -\log \ptheta(\subword)
\end{align}
to the negative log-likelihood objective.

After deleting $\subword$, the token can no longer be used as a single token. Under the local replacement approximation, each occurrence of $\subword$ is replaced by the highest-probability valid segmentation of the same string under the reduced vocabulary:
\begin{align}
    \replacementseq_{\subword}
    \defeq
    \argmax_{\substack{
        \tilde{\subwords} \in (\vocab \setminus \{\subword\})^* \\
        \subword \stringequiv \tilde{\subwords}
    }}
    \ptheta(\tilde{\subwords}).
\end{align}
Since $\ptheta$ is a unigram model, the probability of this replacement token-string is
\begin{align}
    \ptheta(\replacementseq_{\subword})
    =
    \prod_{\tilde{\subword} \in \replacementseq_{\subword}}
    \ptheta(\tilde{\subword}).
\end{align}
After replacement, each expected occurrence therefore contributes
\begin{align}
    -\log \ptheta(\replacementseq_{\subword})
\end{align}
to the negative log-likelihood objective.

Hence, the estimated increase in negative log-likelihood for one expected occurrence of $\subword$ is
\begin{align}
    & -\log \ptheta(\replacementseq_{\subword})
    - \bigl(-\log \ptheta(\subword)\bigr) \nonumber \\
    & \qquad =
    \log \ptheta(\subword)
    - \log \ptheta(\replacementseq_{\subword}).
\end{align}
Multiplying this per-occurrence increase by the expected number of occurrences $\countsnew{\subword}{\dataset}$ gives
\begin{align}
    \tokendelta{\subword}{\objectivefuncll(\cdot,\dataset)}
    \approx
    \countsnew{\subword}{\dataset}
    \left(
    \log \ptheta(\subword)
    -
    \log \ptheta(\replacementseq_{\subword})
    \right).
\end{align}
This proves the stated local approximation.
\end{proof}

\section{Proof of \texorpdfstring{{\Cref{lemma:compmaxdeletioncost}}}{Lemma}}
\label{app:compmaxdeletioncost}

\compmaxdeletioncost*

\begin{proof}
Let $\vocab$ be the current vocabulary, and suppose the corpus has already been segmented using the \CompMax{} encoding rule:
\begin{align}
    \directtoken[\vocab](\characters)
    =
    \argmin_{\substack{
        \subwords \in \vocab^* \\
        \characters \stringequiv \subwords
    }}
    |\subwords|.
\end{align}

Now consider deleting a token $\subword \in \vocab \setminus \alphabet$.
Before deletion, each current occurrence of $\subword$ contributes exactly one token to the corpus token count.
Therefore, the total contribution of all current occurrences of $\subword$ before deletion is
\begin{align}
    C_{\mathrm{before}}(\subword)
    =
    \countsnew{\subword}{\dataset}.
\end{align}

After deleting $\subword$, the token $\subword$ can no longer appear as a single token.
Under the local replacement calculation, each current occurrence of $\subword$ is independently replaced by its shortest decomposition under the reduced vocabulary $\vocab \setminus \{\subword\}$, denoted $\replacementseq_{\subword}$:
\begin{align}
    \replacementseq_{\subword}
    =
    \argmin_{\substack{
        \subwords' \in (\vocab \setminus \{\subword\})^* \\
        \subword \stringequiv \subwords'
    }}
    |\subwords'|.
\end{align}
Thus, under this local replacement calculation, each previous occurrence of $\subword$ contributes $|\replacementseq_{\subword}|$ tokens instead of one token.
The total contribution of these occurrences after deletion is
\begin{align}
    C_{\mathrm{after}}(\subword)
    =
    \countsnew{\subword}{\dataset}
    |\replacementseq_{\subword}|.
\end{align}

Therefore, the estimated increase in corpus length caused by locally replacing all current occurrences of $\subword$ is
\begin{subequations}
\begin{align}
    \tokendelta{\subword}{\objectivefunccomp(\cdot,\dataset)}
    &=
    C_{\mathrm{after}}(\subword)
    -
    C_{\mathrm{before}}(\subword) \\
    &=
    \countsnew{\subword}{\dataset}
    |\replacementseq_{\subword}|
    -
    \countsnew{\subword}{\dataset} \\
    &=
    \countsnew{\subword}{\dataset}
    \left(
    |\replacementseq_{\subword}| - 1
    \right).
\end{align}
\end{subequations}
which completes this proof.
\end{proof}

\section{Proof of \texorpdfstring{{\Cref{lemma:greedyllpmi}}}{Lemma}}
\label{app:greedyllpmi}

\greedyllpmi*

\begin{proof}
As in \cref{app:greedyllgain}, we write
$\kappa \defeq \countsnew{\subword^1,\subword^2}{\dataset}$.
Starting from \cref{eq:greedyll_exact}, we have
\begin{align}
    &\tokendelta{\subword^1,\subword^2}{\objectivefuncll(\cdot,\dataset)} = 
    \label{eq:pmi_exact_start} \\
    &\quad
    \bigl(
    \countsnew{\subword^1}{\dataset}
    -
    \kappa
    \bigr)
    \log
    \bigl(
    \countsnew{\subword^1}{\dataset}
    -
    \kappa
    \bigr)
    -
    \countsnew{\subword^1}{\dataset}
    \log
    \countsnew{\subword^1}{\dataset}
    \nonumber\\
    &\quad
    +
    \bigl(
    \countsnew{\subword^2}{\dataset}
    -
    \kappa
    \bigr)
    \log
    \bigl(
    \countsnew{\subword^2}{\dataset}
    -
    \kappa
    \bigr)
    -
    \countsnew{\subword^2}{\dataset}
    \log
    \countsnew{\subword^2}{\dataset}
    \nonumber\\
    &\quad
    +
    \kappa \log \kappa
    \nonumber\\
    &\quad
    -
    (\totcnt-\kappa)\log(\totcnt-\kappa)
    +
    \totcnt\log\totcnt. \nonumber
\end{align}

Now, define
    $f(a) = a\log a$,
which implies
    $f'(a) = \log a + 1$.
A first-order Taylor expansion of $f(a)$ around point $x$ gives
\begin{subequations}\label{eq:taylor_f}
\begin{align}
    f(a) 
    & \approx f(x) + (a-x)f'(x) \\
    f(x-\kappa) &\approx f(x) + (x-\kappa-x) f'(x) \\
    &= f(x) - \kappa f'(x) \\
    &= x\log x - \kappa(\log x + 1)
\end{align}
\end{subequations}
This approximation is accurate whenever $\kappa \ll x$: the neglected remainder is $\frac{1}{2}\kappa^2 f''(\xi) = O(\kappa^2/x)$, which is small relative to the retained term $\kappa(\log x + 1)$ exactly in that regime.
Now, choosing $x = \countsnew{\subword^1}{\dataset}$, we get
\begin{subequations}
\begin{align}
    &f(\countsnew{\subword^1}{\dataset}-\kappa) \nonumber \\
    &\qquad  = \bigl(
    \countsnew{\subword^1}{\dataset}
    -
    \kappa
    \bigr)
    \log
    \bigl(
    \countsnew{\subword^1}{\dataset}
    -
    \kappa
    \bigr)
    \\
    &\qquad \approx
    \countsnew{\subword^1}{\dataset}
    \log
    \countsnew{\subword^1}{\dataset}
    -\kappa
    \left(
    \log \countsnew{\subword^1}{\dataset}
    + 1
    \right).
\end{align}
\end{subequations}
which implies 
\begin{align}
    &\bigl(
    \countsnew{\subword^1}{\dataset}
    -
    \kappa
    \bigr)
    \log
    \bigl(
    \countsnew{\subword^1}{\dataset}
    -
    \kappa
    \bigr) - \countsnew{\subword^1}{\dataset}
    \log
    \countsnew{\subword^1}{\dataset}
    \\
    &\qquad\qquad\qquad\qquad\qquad\quad \approx
    -\kappa
    \left(
    \log \countsnew{\subword^1}{\dataset}
    + 1
    \right). \nonumber 
\end{align}
With similar results for $x=\countsnew{\subword^2}{\dataset}$ and $x=\totcnt$.

Substituting these terms into \cref{eq:pmi_exact_start}, we get%
\begin{subequations}
\begin{align}
    \tokendelta{\subword^1,\subword^2}{\objectivefuncll(\cdot,\dataset)}
    &\approx
    -\kappa
    \left(
    \log \countsnew{\subword^1}{\dataset}
    + 1
    \right)
    -
    \kappa
    \left(
    \log \countsnew{\subword^2}{\dataset}
    + 1
    \right)
    \nonumber\\
    &\quad
    +
    \kappa\log\kappa
    +
    \kappa
    \left(
    \log \totcnt + 1
    \right)
    \\
    &=
    \kappa
    \left(
    \log
    \frac{
        \totcnt \, \kappa
    }{
        \countsnew{\subword^1}{\dataset}
        \countsnew{\subword^2}{\dataset}
    }
    - 1
    \right) \\
    &=
    \kappa
    \left(
    \operatorname{PMI}_{\mathrm{adj}}(\subword^1,\subword^2)
    - 1
    \right).
\end{align}
\end{subequations}
which completes the proof.
\end{proof}

\section{Tokeniser Training Details}
\label{app:tokeniser_details}

All four tokenisers use the same preprocessing pipeline, implemented with the HuggingFace \texttt{tokenizers} library.
Text is normalised with NFC, and pretokenised with a byte-level pretokeniser, \texttt{ByteLevel(add\_prefix\_space=False, trim\_offsets=True, use\_regex=True)}, which applies the GPT-2 pretokenisation regex.
The corresponding decoder and post-processor are:
\begin{align}
    &\texttt{ByteLevel(add\_prefix\_space=True,} \nonumber \\
    &\qquad  \texttt{trim\_offsets=True, use\_regex=True)} \nonumber \\
    &\texttt{ByteLevel(add\_prefix\_space=True,} \nonumber \\
    &\qquad  \texttt{trim\_offsets=False, use\_regex=True)}  \nonumber 
\end{align}
respectively.
This choice is deliberately standard.
Because pretokenisation strongly constrains which token boundaries a tokeniser can learn, holding it fixed is necessary for the objective and the search procedure to be the only varying factors in our comparison.
We use byte-level fallback throughout, so every string remains encodable and no \texttt{<unk>} tokens are required.

\section{Hyperparameters and Architecture}
\label{app:hyperparams}

All language models follow a Llama-style decoder-only architecture. \Cref{tab:arch_hyperparams} summarises the per-size architectural settings. All models use SiLU activations, grouped-query attention with separate $K/V$ head counts, RMSNorm, and rotary positional embeddings (RoPE, $\theta=10{,}000$).

\begin{table*}[t]
\centering
\adjustbox{max width=\textwidth}{%
\begin{tabular}{lcccccc}
\toprule
Size & Hidden & Intermediate & Heads & KV heads & Layers & Tied \\
\midrule
100M & 576  & 1{,}536 & 9  & 3 & 30 & \checkmark \\
300M & 960  & 2{,}560 & 15 & 5 & 32 & \checkmark \\
500M & 1280 & 3{,}456 & 16 & 4 & 26 & \checkmark \\
1B   & 2048 & 5{,}632 & 32 & 4 & 22 & \checkmark \\
\bottomrule
\end{tabular}}
\caption{Per-size architecture for the Llama-style language models. ``Tied'' denotes weight tying between input and output embeddings.}
\label{tab:arch_hyperparams}
\end{table*}

\paragraph{Optimisation.}
We train models with AdamW using learning rate $3\!\times\!10^{-4}$, $\beta_1\!=\!0.9$, $\beta_2\!=\!0.95$, weight decay $0.1$, and a maximum gradient norm of $1.0$. The learning-rate schedule is warmup--stable--decay with $2{,}000$ warmup steps, a stable phase, and $10{,}000$ linear-decay steps, ending with a final learning rate of $10\%$ of its peak value.

\paragraph{Training setup.}
All models are trained at sequence length $2{,}048$ in bf16-mixed precision with FlashAttention~2. The per-device batch size is $32$, and gradient accumulation is adjusted per model size to match a fixed effective batch size. Following Chinchilla-optimal scaling, each model is trained on roughly $20\times$ its parameter count in tokens. Training uses a single fixed random seed (42), except for the 100M setting and the English 1B models at 128k vocabulary, where we train three seeds (42, 43, 44) per configuration. 
All experiments are implemented in PyTorch with our open-source training stack.

\section{Detailed Metric Definitions}
\label{app:metric_defs}

This section gives precise definitions for every metric used in the paper. 
Let $\dataset$ denote a held-out corpus, $N_{\text{bytes}}$ its size in UTF-8 bytes, and $\totcnt$ the number of tokens produced by a given tokeniser on $\dataset$.

\paragraph{Extrinsic.} We evaluate models based on two extrinsic metrics.
    \defn{Bits per byte (\bpb)}, defined as $-\log_2 P_\theta(\dataset) / N_{\text{bytes}}$: the trained model's negative log-likelihood on $\dataset$, normalised by the byte count. Lower is better. Because the denominator is a property of the raw text (and hence identical across tokenisers), \bpb is directly comparable across tokenisers; token-level perplexity, by contrast, is biased against tokenisers that produce shorter sequences.
    \defn{Minimal-pair grammatical accuracy,} which, for an acceptable--unacceptable sentence pair, evaluates the model as correct if it assigns higher per-token log-probability to the acceptable sentence. It than averages this values across a corpus of such pairs. We use BLiMP \citep{warstadt2020blimp} for English, MultiBLiMP \citep{jumelet-etal-2026-multiblimp} for English, German, Spanish, and Turkish, and ZhoBLiMP \citep{liu2025systematicassessmentlanguagemodels} for Chinese. 
    For this metric, higher is better.

\paragraph{Intrinsic.} We evaluate models on 5 types of intrinsic metrics.
    \defn{Compression} is $\objectivefunccomp = \totcnt$, i.e., the corpus token count, where lower means more compact. 
    \defn{Bytes per token} (BPT)  is defined as $N_{\text{bytes}} / \totcnt$, the average number of source bytes captured by one token, where higher means more compact. 
    \defn{Vocab Util} is the fraction of vocabulary IDs that appear at least once in $\dataset$.
    \defn{Token length} (Tok Len) is the average character length of the entries in the learned vocabulary, with each entry counted once, unweighted by corpus frequency.
    (Unigram) \defn{Entropy} is defined here as $H = -\sum_{\subword} \ptheta(\subword) \log_2 \ptheta(\subword)$, where $\ptheta(\subword)$ is a unigram language model trained on token counts. 
    \defn{Unigram log-likelihood objective} is $\objectivefuncll = H \cdot \totcnt$, where lower suggests a better fit under the empirical unigram model. 
    \defn{Zipf $\alpha$} is the negated slope of a log-log OLS fit of frequency vs. rank, with tokens sorted in descending frequency; higher $\alpha$ means a steeper distribution. \defn{Coverage 50\%} is the smallest $\ell$ such that the $\ell$ most frequent tokens together account for at least half of all token occurrences in $\dataset$; smaller $\ell$ means usage is concentrated on a small core.\looseness=-1

\paragraph{Vocabulary overlap.} 
Finally, we also compute vocabulary overlap between tokenisers. 
For two tokenisers $A$ and $B$, their overlap is measured as $\nicefrac{|\vocab_A \cap \vocab_B|}{|\vocab_A|}$, where $\vocab_A$ and $\vocab_B$ are their learned vocabularies. 
This measure is symmetric for tokenisers with equal vocabulary size.

\section{English vs.\ Multilingual Intrinsic Comparison}
\label{app:english_vs_multi}

This appendix reports the multilingual counterpart of \Cref{tab:tokenizer_stats_grouped,tab:vocab_overlap}, using the multilingual tokenisers and the concatenation of the five per-language test splits (220{,}423 documents: 47{,}384 English, 54{,}969 German, 54{,}489 Spanish, 41{,}986 Turkish, 21{,}595 Chinese).\looseness=-1

\paragraph{Compression gaps widen in the multilingual setting.}
\Cref{tab:app_distribution_eng_vs_multi} shows each tokenisers compression in both English-only and multilingual corpora.
In English, \UnigramLM{} produces 18.5\% more tokens than \BPE{} and \CompMax{} 7.4\% more.
In multilingual, the top-down gap roughly doubles: \UnigramLM{} +39.6\%, \CompMax{} +26.5\%.
\GreedyLL{} remains close to \BPE{} in both settings (+0.5\% in English, +0.9\% in multilingual).
The bottom-up vs.\ top-down separation observed in the main text therefore grows under a shared multilingual vocabulary budget.

\begin{table*}[t]
\centering
\adjustbox{max width=\textwidth}{%
\begin{tabular}{lllccccccccc}
\toprule
Setting & Tokeniser & Objective & Search & $\objectivefunccomp$ $\downarrow$ & $\objectivefuncll$ $\downarrow$ & Entropy & Zipf $\alpha$ & BPT $\uparrow$ & Tok Len & Vocab Util $\uparrow$ & Coverage 50\% \\
\midrule
\multirow{4}{*}{English}
& \BPE{}        & $\objectivefunccomp$ & bottom-up & \textbf{46{,}859{,}565} & $5.216 \times 10^{8}$         & 11.132 & 1.484 & \textbf{4.886} & 7.06 & 98.9\% & 204 \\
& \CompMax{}    & $\objectivefunccomp$ & top-down  & 50{,}312{,}483          & $5.484 \times 10^{8}$         & 10.900 & 1.443 & 4.551 & 6.01 & \textbf{99.8\%} & 149 \\
& \GreedyLL{}   & $\objectivefuncll$   & bottom-up & 47{,}111{,}733          & $\bm{5.203 \times 10^{8}}$    & 11.044 & 1.574 & 4.860 & 7.48 & 96.8\% & 191 \\
& \UnigramLM{}  & $\objectivefuncll$   & top-down  & 55{,}546{,}423          & $5.552 \times 10^{8}$         &  9.996 & 1.717 & 4.122 & 6.46 & 97.1\% & 53 \\
\midrule
\multirow{4}{*}{Multilingual}
& \BPE{}        & $\objectivefunccomp$ & bottom-up & \textbf{172{,}372{,}100} & $2.262 \times 10^{9}$        & 13.122 & 1.247 & \textbf{4.675} & 6.96 & 99.92\% & 1{,}483 \\
& \CompMax{}    & $\objectivefunccomp$ & top-down  & 217{,}983{,}502          & $2.678 \times 10^{9}$        & 12.286 & 1.266 & 3.696 & 5.52 & \textbf{99.94\%} & 618 \\
& \GreedyLL{}   & $\objectivefuncll$   & bottom-up & 173{,}917{,}215          & $\bm{2.245 \times 10^{9}}$   & 12.907 & 1.347 & 4.633 & 7.40 & 99.36\% & 1{,}114 \\
& \UnigramLM{}  & $\objectivefuncll$   & top-down  & 240{,}669{,}745          & $2.597 \times 10^{9}$        & 10.791 & 1.598 & 3.348 & 6.79 & 99.74\% & 139 \\
\bottomrule
\end{tabular}}
\caption{Intrinsic statistics in English-only and multilingual settings.
All values are computed at 128k vocabulary size. The multilingual evaluation corpus is the union of the five per-language test splits.}
\label{tab:app_distribution_eng_vs_multi}
\end{table*}

\paragraph{Distributional shape changes with the evaluation corpus.}
\Cref{tab:app_distribution_eng_vs_multi} reports the full intrinsic statistics in both settings.
All four token distributions flatten in the multilingual setting: Zipf $\alpha$ values decrease, entropies increase, and Coverage 50\% grows by an order of magnitude (e.g.,\ \BPE{}: 204 $\to$ 1{,}483), reflecting the broader set of frequent token types in a five-language corpus.
The relative pattern between objectives is preserved, however: \UnigramLM{} retains the steepest Zipf $\alpha$ and lowest entropy in both settings, and likelihood-based methods stay more concentrated than their compression-based counterparts within each search family.

\paragraph{Vocabulary utilisation rises with corpus breadth.}
The unused-vocabulary tail visible in English (Vocab Util 96.8--99.8\%) largely disappears in the multilingual evaluation (99.36--99.94\%), because the broader, more diverse corpus exercises a larger portion of the 128k vocabulary.

\paragraph{Vocabulary overlap follows the same cluster structure.}
\Cref{tab:app_vocab_overlap_multi} is the multilingual counterpart of \Cref{tab:vocab_overlap}.
\BPE{} and \GreedyLL{} remain the most similar pair (80.5\% overlap); \CompMax{} and \UnigramLM{} form a second cluster (63.6\%); cross-family overlap stays in the 30--37\% range.\looseness=-1

\begin{table}[t]
\centering
\adjustbox{max width=\linewidth}{%
\begin{tabular}{lcccc}
\toprule
 & \BPE{} & \CompMax{} & \GreedyLL{} & \UnigramLM{} \\
\midrule
\BPE{}        & ---    & 37.0\% & 80.5\% & 32.3\% \\
\CompMax{}    & 37.0\% & ---    & 33.6\% & 63.6\% \\
\GreedyLL{}   & 80.5\% & 33.6\% & ---    & 31.2\% \\
\UnigramLM{}  & 32.3\% & 63.6\% & 31.2\% & ---    \\
\bottomrule
\end{tabular}}
\caption{Multilingual vocabulary overlap at 128k vocabulary size; the multilingual counterpart of \Cref{tab:vocab_overlap}.}
\label{tab:app_vocab_overlap_multi}
\end{table}

\section{Analysis of Merge Dynamics}
\label{app:merge_dynamics}

Merge trajectories are only defined for the bottom-up methods, so this analysis covers \BPE{} and \GreedyLL{} alone.
A token is \defn{absorbed} if it is created by one merge and later appears inside another merge: if a tokeniser first builds \texttt{physi} and then merges it into \texttt{physic}, \texttt{physi} has been absorbed.
Absorption is common and not by itself a sign of waste, since an absorbed token usually remains in the vocabulary and can still used on its own.
It only becomes wasteful when the absorbed token reduces in frequency enough to be considered not-useful any more.\looseness=-1

\begin{table}[t]
\centering
\adjustbox{max width=\columnwidth}{%
\begin{tabular}{lccc}
\toprule
Metric & \BPE{} & \GreedyLL{}  & \GreedyLL{}  \\
&& (Exact) & (Approx) \\
\midrule
Total absorbed & 34,224 & 41,599 & 41,751 \\
Absorption rate (all) & 26.8\% & 32.6\% & 32.7\% \\
Absorption rate ($|\subword|\geq 5$) & 21.7\% & 27.8\% & 27.9\% \\
Absorption rate ($|\subword|\geq 10$) & 4.5\% & 8.5\% & 8.8\% \\
\midrule
Absorbed with zero usage & 3.5\% & 9.2\% & 8.7\% \\
Absorbed with low usage ($\leq 10$) & 26.2\% & 36.9\% & 36.0\% \\
Not absorbed, zero usage & .2\% & .4\% & .4\% \\
\bottomrule
\end{tabular}}
\caption{Merge-dynamics for bottom-up methods. 
Low usage means at most 10 occurrences on the test set. 
Absorbed with zero or low usage are given as a percentage of total absorptions.}
\label{tab:kill_dynamics_summary}
\end{table}

\Cref{tab:kill_dynamics_summary} present absorption rates for both \BPE and \GreedyLL.
Absorption is common for both methods, and by itself says little. 
The methods separate, however, on the tokens that are absorbed and then never used at all, which \GreedyLL{} produces nearly three times as often as \BPE{} (9.2\% vs.\ 3.5\%). 
This is where its lower vocabulary utilisation at 128k comes from.
Also interestingly, the absorption gap widens for long tokens (8.5\% vs.\ 4.5\% at ten characters or more). 
This suggests \GreedyLL reaches its final vocabulary through more intermediate steps than \BPE, which is how it ends up with longer tokens.

\begin{figure*}[t]
\centering
\includegraphics[width=0.82\textwidth]{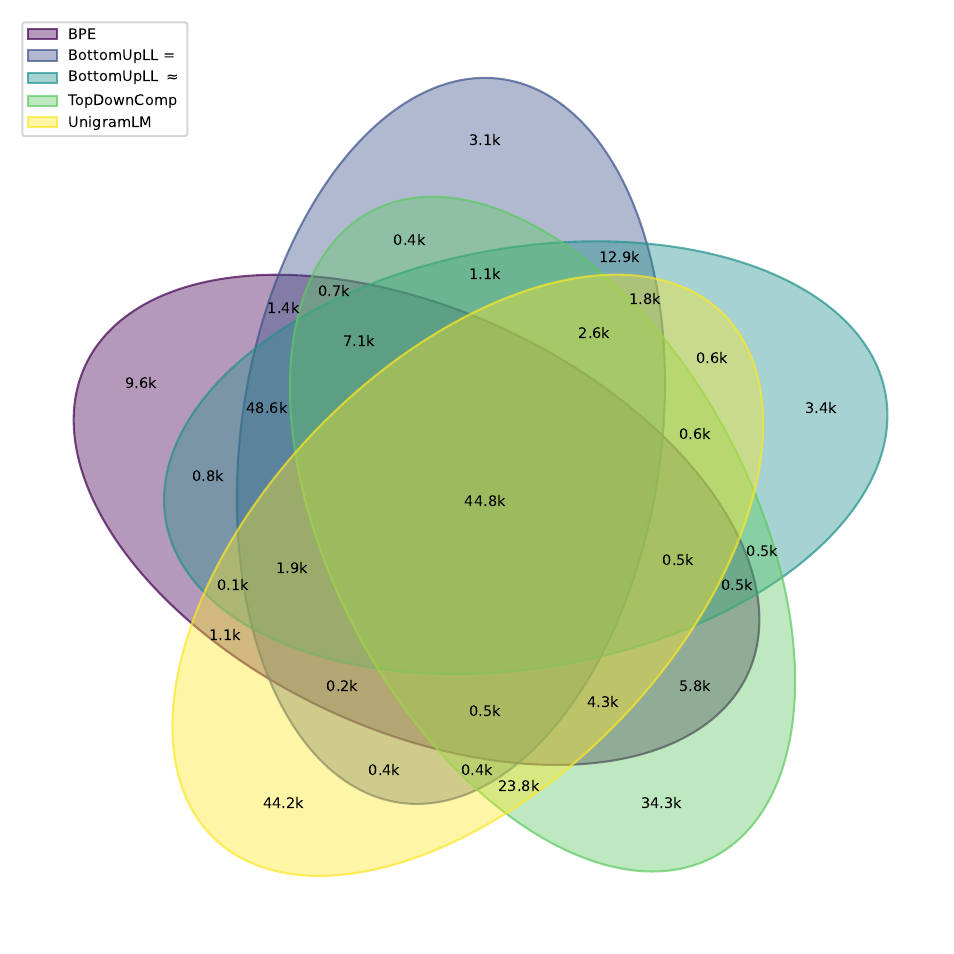}
\vspace{-20pt}
\caption{Five-set Venn diagram of the 128k vocabularies. Each region gives the number of tokens (in thousands) belonging to exactly that combination of tokenisers. The largest two-way region is \BPE{} $\cap$ \GreedyLL{}, and the \GreedyLL{} exact and approximate variants are nearly coextensive.}
\label{fig:venn5}
\end{figure*}

\section{Vocabulary Overlap Structure}
\label{app:vocab_venn}

\Cref{tab:vocab_overlap} reports pairwise vocabulary overlap in English, but pairwise numbers cannot show how the five vocabularies intersect jointly.
\Cref{fig:venn5} gives the full picture as a five-set Venn diagram over the 128k English-only vocabularies.
The structure mirrors the pairwise results: the largest regions are those shared by the two \greedyalg{} methods, the two \pruningalg{} methods retain the largest exclusive regions, and cross-family regions are comparatively small.

\section{Qualitative Examples}
\label{app:qual_ex}

Aggregate metrics show the dominant trends, but they hide the local segmentation decisions that produce those trends. 
Here, we inspect a small set of representative examples covering plain English, scientific terminology, code-like text, URLs, Chinese, and multilingual accented text. 
These examples are illustrative rather than statistically decisive, but they make the inductive biases of the methods easier to interpret.
\Cref{tab:example_token_counts} reports the total number of tokens produced across the 10 illustrative examples. 
The ordering broadly matches the full compression benchmark.

\begin{table}[t]
\centering
\adjustbox{max width=\columnwidth}{%
\begin{tabular}{lc}
\toprule
Method & \# of Tokens $\downarrow$ \\
\midrule
\BPE{}               & 156 \\
\CompMax{}           & 160 \\
\GreedyLL{} (Exact)  & 154 \\
\GreedyLL{} (Approx) & \textbf{153} \\
\UnigramLM{}         & 173 \\
\bottomrule
\end{tabular}}
\caption{Total number of tokens produced across 10 illustrative test examples. Lower is better.}
\label{tab:example_token_counts}
\end{table}

\paragraph{Scientific and medical terminology.}
On examples such as: \\
(i) \texttt{Photosynthesis converts carbon dioxide and water into glucose and oxygen.} \\
or \\
(ii) \texttt{Electroencephalography is used to diagnose neurological disorders.} \\
\GreedyLL{} often produces the shortest segmentation. It tends to preserve long technical units such as \texttt{Photosynthesis} or fragments of \texttt{encephalography} as large tokens. \BPE{} is also compact, but more often splits these words into high-frequency pieces. 
This is consistent with the merge-dynamics analysis: \GreedyLL{} is more willing to build long lexical chains through intermediate tokens.

\paragraph{Code and punctuation-heavy text.}
On code-like or SQL-like strings, \BPE{} is usually strongest. For example, on: \\
(iii) \texttt{SELECT * FROM users WHERE id = 1; DROP TABLE users;--} \\
\BPE{} produces the fewest tokens and preserves punctuation patterns such as \texttt{;--}. \UnigramLM{} is much more fragmented, often splitting short code tokens such as \texttt{id}, \texttt{if}, or \texttt{def} into character-level tokens. This reflects the concentration of its token distribution: unless short code tokens are strongly supported by the learned vocabulary, the model falls back to smaller tokens.

\paragraph{URL-like text.}
For URL strings such as: \\
(iv) \url{https://www.example.com/path/to/resource?query=value&page=1} \\
\BPE{} and \CompMax{} are especially compact. \BPE{} often preserves web-specific substrings such as \texttt{https}, while \GreedyLL{} may split them into statistically meaningful but less URL-specific pieces such as \texttt{http}+\texttt{s}. This is a case where frequency-driven compression appears better matched to the surface regularities of web text than likelihood-based lexical cohesion.

\paragraph{Chinese text.}
On the short Chinese example: \\
(v)  \begin{CJK*}{UTF8}{gbsn}\texttt{人工智能正在改变世界}\end{CJK*} \\
\UnigramLM{} produces the fewest tokens among the inspected methods. 
This does not contradict the multilingual BPB results in \Cref{tab:main_results_multi}, where bottom-up methods are much better on Chinese overall.
Rather, it shows that example-level token count and corpus-level language-model BPB can diverge: a tokeniser may preserve some non-Latin chunks well while still producing a worse distribution of tokens for language modelling across the full evaluation set.

\paragraph{Multilingual accented text.}
Finally, for: \\
(vi) \texttt{¡Hola! ¿Cómo estás? Très bien, merci. Danke schön!} \\
all methods produce similar total counts, but the internal boundaries differ. \UnigramLM{} tends to preserve accented fragments such as \texttt{Cómo} and \texttt{estás} more coherently, while \BPE{} more often splits them into smaller pieces. Thus, even when example-level token counts are similar, tokenisers can impose different boundaries that may matter for downstream behaviour.

\paragraph{Take away.}
Together, these examples illustrate the same qualitative biases seen in the aggregate analyses: \BPE{} is robust on punctuation-heavy and web-like text, \GreedyLL{} tends to form longer lexical units, \UnigramLM{} can preserve some accented or non-Latin fragments while fragmenting other domains, and \CompMax{} behaves as a compression-oriented top-down method.

\section{Significance of \bpb{} Differences}
\label{app:significance}

Some \bpb{} differences between tokenisers are small, so we test whether they are robust to the choice of evaluation documents.
For each objective-matched pair of tokenisers, we run a paired document-level bootstrap over the held-out English test set ($47{,}384$ documents).
In each of $10{,}000$ iterations we resample documents with replacement and use the \emph{same} resampled indices for both models, so that the shared per-document difficulty cancels.
We then recompute each model's corpus-level \bpb{} on the resampled set and record the difference between the \greedyalg{} and \pruningalg{} model.
\Cref{tab:bootstrap} reports point estimates and $95\%$ confidence intervals for the 128k-vocabulary models, using seed 42 for each configuration.

\begin{table}[h]
\centering
\adjustbox{max width=\columnwidth}{%
\begin{tabular}{llrc}
\toprule
Size & Comparison & $\Delta$ \bpb{} & 95\% CI \\
\midrule
\multirow{2}{*}{300M}
 & \BPE{} vs.\ \CompMax{}        & $-0.0031$ & $[-0.0032, -0.0029]$ \\
 & \GreedyLL{} vs.\ \UnigramLM{} & $-0.0080$ & $[-0.0082, -0.0078]$ \\
\midrule
\multirow{2}{*}{500M}
 & \BPE{} vs.\ \CompMax{}        & $-0.0016$ & $[-0.0017, -0.0014]$ \\
 & \GreedyLL{} vs.\ \UnigramLM{} & $-0.0075$ & $[-0.0077, -0.0074]$ \\
\midrule
\multirow{2}{*}{1B}
 & \BPE{} vs.\ \CompMax{}        & $-0.0025$ & $[-0.0027, -0.0023]$ \\
 & \GreedyLL{} vs.\ \UnigramLM{} & $-0.0039$ & $[-0.0041, -0.0037]$ \\
\bottomrule
\end{tabular}}
\caption{Paired document-level bootstrap ($10{,}000$ resamples) of \bpb{} differences between objective-matched tokenisers at 128k vocabulary. Negative values indicate the \greedyalg{} method achieves lower \bpb{}. All confidence intervals exclude zero.}
\label{tab:bootstrap}
\end{table}

All six confidence intervals lie entirely below zero, indicating that, for each objective, the \greedyalg{} tokeniser achieves significantly lower \bpb{} than its \pruningalg{} counterpart, and that this ordering is not an artifact of the particular held-out documents used for evaluation.

\paragraph{Training-seed variance.}
The bootstrap above controls for the choice of evaluation documents, but not for randomness in model training.
To assess the latter, we retrain the English 1B models at 128k vocabulary with two additional seeds (43 and 44), giving three seeds per tokeniser.
\Cref{tab:seeds} reports the per-seed \bpb{} values.
Both objective-matched orderings are preserved in the mean: \BPE{} outperforms \CompMax{}, and \GreedyLL{} outperforms \UnigramLM{}.
We note that \GreedyLL{} shows a noticeably larger spread than the other tokenisers, driven by a single seed; averaged over three seeds it attains a lower \bpb{} than \CompMax{}, whereas on seed 42 alone the ordering is reversed.

\begin{table}[h]
\centering
\adjustbox{max width=\columnwidth}{%
\begin{tabular}{lcccl}
\toprule
& \multicolumn{3}{c}{Seed} & \\\cmidrule(lr){2-4}
Tokeniser & 42 & 43 & 44 & Mean \\
\midrule
\BPE{}       & 0.7795 & 0.7787 & 0.7787 & 0.7790 \std{4e-4} \\
\CompMax{}   & 0.7819 & 0.7817 & 0.7813 & 0.7816 \std{3e-4} \\
\GreedyLL{}  & 0.7833 & 0.7789 & 0.7787 & 0.7803 \std{26e-4} \\
\UnigramLM{} & 0.7872 & 0.7871 & 0.7873 & 0.7872 \std{1e-4} \\
\bottomrule
\end{tabular}}
\caption{Per-seed \bpb{} for the English 1B models at 128k vocabulary. Mean is taken over the three seeds, with standard deviation.}
\label{tab:seeds}
\end{table}

\section{Effect of the Top-Down Approximations}
\label{app:prune_rate}

In this section, we investigate the impact of our approximations in \pruningalg methods.

\paragraph{Pruning rate.}
For efficiency and as is standard practice, our \pruningalg{} methods remove batches of tokens at each pruning round, rather than a single token.
Specifically, our main experiments prune $10\%$ of the current vocabulary per round.
To assess how much this batching affects the learned vocabulary, we re-train \UnigramLM{} and \CompMax{} at 128k vocabulary using pruning rates of $1\%$ and $0.1\%$ per round, and measure the vocabulary overlap with the corresponding $10\%$ tokeniser.
For \UnigramLM{}, the overlap is $97.3\%$ at a $1\%$ pruning rate and $97.1\%$ at $0.1\%$; for \CompMax{} it remains above $99\%$ in both cases.
Inspecting the non-overlapping tokens, we find that they occur very rarely in the tokenised corpus.
Pruning in batches therefore appears to have a limited effect on the resulting vocabulary.

\paragraph{Local replacement.}
Our second approximation is the local replacement approximation of \cref{sec:method_topdown}, which scores a deletion by re-segmenting only the deleted token, rather than recomputing the objective over the whole corpus.
Computing exact deletion costs requires re-segmenting the corpus once per candidate token at every pruning step, which is intractable at the vocabulary sizes used in our main experiments.
We therefore compare the two at a small scale: on $5{,}000$ lines of English FineWeb, we prune from a seed vocabulary of $3{,}000$ tokens down to $800$ with a pruning rate of $10\%$, once using exact deletion costs and once using the local replacement approximation.
The two procedures agree on $652$ of the $800$ resulting tokens, an overlap of $81.5\%$.
The approximation therefore recovers most, but not all, of the vocabulary that exact scoring would select.
We note that this is a small-vocabulary and small-corpus setting, though, and that the effect of the approximation at the scales used in our main experiments remains open.

\end{document}